\declaretheorem{proposition}
\def\eqref#1{equation~\ref{#1}}
\def\1{\bm{1}}
\DeclareMathAlphabet{\mathsfit}{\encodingdefault}{\sfdefault}{m}{sl}
\SetMathAlphabet{\mathsfit}{bold}{\encodingdefault}{\sfdefault}{bx}{n}
\def\II{{\mathcal{I}}}
\def\PP{{\mathcal{P}}}
\newcommand{\OP}{\textsc{Op-}}
\newcommand{\RF}{\textsc{Reinforce~}}
\newcommand{\R}{\mathbb{R}}
\title{An operator view of policy gradient methods}
\author{%
   Dibya Ghosh\\
   Google Brain\\
   \And
   Marlos C. Machado \\
   Google Brain\\
      \And
   Nicolas Le Roux\\
   Google Brain\\
}
\begin{document}

\maketitle

\begin{abstract}
We cast policy gradient methods as the repeated application of two operators: a policy improvement operator $\II$, which maps any policy $\pi$ to a better one $\II\pi$, and a projection operator $\PP$, which finds the best approximation of $\II\pi$ in the set of realizable policies.  We use this framework to introduce operator-based versions of well-known policy gradient methods such as \RF and PPO, which leads to a better understanding of their original counterparts. We also use the understanding we develop of the role of $\II$ and $\PP$ to propose a new global lower bound of the expected return. This new perspective allows us to further bridge the gap between policy-based and value-based methods, showing how \RF and the Bellman optimality operator, for example, can be seen as two sides of the same coin. 
\end{abstract}

\section{Introduction}
Model-free reinforcement learning algorithms aim at learning a policy that maximizes the (discounted) sum of rewards directly from samples generated by the agent's interactions with the environment. These techniques mainly fall in one of two categories: value-based methods \citep[e.g.,][]{watkins1992q, riedmiller2005neural}, where the agent predicts the value of taking an action and then chooses the action with the largest predicted value; and policy-based methods, where the agent directly learns a good distribution over actions at each state. Although several past works created connections between the two views \citep[e.g.,][]{nachum17bridging,schulman17equivalence}, such connections are often limited to the optimal policy and they do not capture training dynamics.

In particular, value-based methods like fitted Q-iteration~\citep{riedmiller2005neural} and Q-learning~\citep{watkins1992q} are often cast as the iterative application of an improvement operator, the Bellman optimality operator, which transforms the value function into a ``better'' one (unless the value function is already the optimal one). When dealing with a restricted set of policies, we often use function approximation for the value function. In this case, the learning procedure interleaves the improvement operator with a projection operator, which finds the best approximation of this improved value function in the space of realizable value functions.

While this view is the basis for many intuitions around the convergence of value-based methods, no such view exists for policy-gradient (PG) methods, which are usually cast as doing gradient ascent on a parametric function representing the expected return of the policy \citep[e.g.,][]{williams1992simple,schulman2015trust}. Although this property can be used to show the convergence when using a sufficiently small step-size, it does little to our understanding of the relationship of policy-gradient methods and value-based ones.
 
In this work, we show that PG methods can also be seen as repeatedly applying two operators akin to those encountered in value-based methods: (a) a policy improvement operator, which maps any policy to a policy achieving strictly larger return; and (b) a projection operator, which finds the best approximation of this new policy in the space of realizable policies. We then recast common PG methods under this framework, using their operator interpretations to shed light on their properties.

We also make the following additional contributions: (a) We present a lower bound on the performance of a policy using the state-action formulation, leading to 
an alternative to conservative policy improvement; (b) We provide a formal justification of $\alpha$-divergences in the imitation learning setting.

\section{Background}

We consider an infinite-horizon discounted Markov decision process (MDP) \citep{Puterman1994} defined by the tuple $\mathcal{M} = \langle \mathscr{S}, \mathscr{A}, p, r, d_0, \gamma \rangle$ where $\mathscr{S}$ is a finite set of states, $\mathscr{A}$ is a finite action set, $p : \mathscr{S} \times \mathscr{A} \rightarrow \Delta(\mathscr{S})$ is the transition probability function (where $\Delta(\cdot)$ denotes the probability simplex), $r : \mathscr{S} \times \mathscr{A} \rightarrow [0, R_{\max}]$ is the reward function, $d_0$ is the initial distribution of states, and $\gamma \in [0, 1)$ is the discount factor. The agent's goal is to learn a policy $\pi: \mathscr{S} \rightarrow \Delta(\mathscr{A})$ that maximizes the expected discounted sum of rewards. In this paper, we study the PG updates on expectation, not their stochastic variants. Thus, our presentation and analyses use the true gradient of the functions of interest. Below we formalize these concepts and we discuss representative algorithms in the trajectory and state-action formulations.

\subsection{Trajectory Formulation}

The expected discounted return can be defined as $J(\pi) = \mathbb{E}_{s_0, a_0, \ldots} \Big[\sum_{t=0}^\infty \gamma^t r(s_t, a_t)\Big]$, where $s_0 \sim d_0, a_t \sim \pi(a_t | s_t),$ and $s_{t+1} \sim p(s_{t+1} | s_t, a_t)$. Moreover, $\tau$ denotes a specific trajectory, $\tau = \langle s_0, a_0, s_1, \ldots \rangle$, and $R(\tau)$ denotes the return of that trajectory, that is, $R(\tau) = \sum_{t=0}^{\infty} \gamma^t r(s_t, a_t)$.

PG methods seek the $\pi^\ast$ maximizing $J$, i.e., $ \pi^\ast = \arg\max_\pi J(\pi) = \arg\max_\pi \int_\tau R(\tau)\pi(\tau)\; d\tau$.
Policies often live in a restricted class $\Pi$ parameterized by $\theta \in \R^d$, and the problem becomes
\begin{align}
    \theta^\ast &= \arg\max_\theta J(\pi_\theta) = \arg\max_\theta \int_\tau R(\tau)\pi_\theta(\tau)\; d\tau \; ,
\end{align}
where $\pi_\theta(\tau)$ is the probability of $\tau$ under the policy indexed by $\theta$. Note that, although we write $\pi_\theta(\tau)$, policies define distributions over actions given states and they only indirectly define distributions over trajectories.

\RF~\citep{williams1992simple} is one of the most traditional PG methods. It computes, at each step, the gradient of $J(\pi_\theta)$ with respect to $\theta$ and performs the following update:
\begin{align}
\label{eq:update_trajectory}
    \theta_{t+1}    &= \theta_t + \epsilon_t \int_\tau \pi_{\theta_t}(\tau) R(\tau) \frac{\partial \log \pi_{\theta}(\tau)}{\partial \theta}\bigg|_{\theta = \theta_t} \; d\tau \; ,
\end{align}
where $\epsilon_t$ is a stepsize. We shall replace $\pi_{\theta_t}$ by $\pi_t$ when the meaning is clear from context.

\subsection{State-Action Formulation}

In the state-action formulation, we use the standard notions of state and state-action value function:
\begin{align}
V^\pi(s_t) = \mathbb{E}_{a_{t}, s_{t+1}, \ldots} \Bigg[ \sum_{k=0}^\infty\gamma^k r(s_{t+k}, a_{t+k})\Bigg] &, \quad
Q^\pi(s_t, a_t) = \mathbb{E}_{s_{t+1}, a_{t+1}, \ldots} \Bigg[ \sum_{k=0}^\infty\gamma^k r(s_{t+k}, a_{t+k})\Bigg],
\end{align}
where $a_t \sim \pi(a_t | s_t)$, and $s_{t+1} \sim p(s_{t+1} | s_t, a_t)$, for $t \geq 0$. The policy gradient theorem~\citep{sutton2000policy} provides an update equivalent to the \RF update in Equation \ref{eq:update_trajectory} for the state-action formulation:
\begin{align}
\label{eq:update_state-action}
    \theta_{t+1}    &= \theta_t + \epsilon \sum_s d^{\pi_t}(s) \sum_ a \pi_t(a | s) Q^{\pi_t}(s, a) \frac{\partial \log \pi_{\theta}(a | s)}{\partial \theta}\bigg|_{\theta = \theta_t} \; ,
\end{align}
where $d^{\pi}$ is the discounted stationary distribution induced by the policy $\pi$.

\section{An Operator View of \textsc{Reinforce}}
The parameter updates in Eq.~\ref{eq:update_trajectory} and Eq.~\ref{eq:update_state-action} involve the current policy $\pi_{\theta_t}$ in the sampling distribution (resp. the stationary distribution), the value function, and inside the log term. While the log term is generally easy to deal with, the non-stationarity of the sampling distribution and value function is the source of many difficulties in the optimization process. One possibility to alleviate this issue is to fix the sampling distribution and value function while only optimizing the parameters of the policy inside the $\log$, an approach which finds its justification by devising a lower bound on $J$~\citep{kober2009policy,leroux2016efficient,abdolmaleki2018maximum} or a locally valid approximation ~\citep{kakade2002approximately,schulman2015trust}.

All these approaches can be cast as minimizing a divergence measure between the current policy $\pi$ and a fixed policy $\mu$\footnote{``Policy'' is loosely defined here as a distribution over trajectories.} which achieves higher return than $\pi$. Thus, moving from $\pi$ to $\mu$ can be seen as a \emph{policy improvement step} and we have $\mu = \II\pi$, with $\II$ the improvement operator. Since the resulting $\mu$ might not be in the set of realizable policies, the divergence minimization acts as a \emph{projection step} using a projection operator $\PP$. In all these cases, when only performing one gradient step to minimize the divergence, we recover the original updates of Eq.~\ref{eq:update_trajectory} and Eq.~\ref{eq:update_state-action}.

This decomposition in improvement and projection steps allows us to see PG methods not simply as performing gradient ascent on an objective function, but as the successive application of a \emph{policy improvement} and a \emph{projection operator}. It highlights, for example, that even if an improvement operator produces a $\mu$ with high returns, if the projection operator cannot approximate this $\mu$ well, then performance may not increase. That is, for a given projection operator, an improvement operator compatible with the projection may be preferred over an improvement operator that generates the most improved policies. 

This makes us wonder which policies $\mu$ and which projections to use. In particular, we are interested in operators which satisfy the following two properties: (a) The optimal policy $\pi(\theta^\ast)$ should be a stationary point of the composition $\PP \circ \II$, as iteratively applying $\PP \circ \II$ would otherwise lead to a suboptimal policy, and (b) Doing an approximate projection step of $\II\pi$, using gradient ascent starting from $\pi$, should always lead to a better policy than $\pi$.
In particular, if the combination leads to maximizing a function that is a lower bound of $J$ everywhere, we know the combination of the two steps, even when solved approximately, leads to an increase in $J$ and will converge to a locally optimal policy.

These tools allow us to explore several possibilities for these operators. In this section we present \textsc{Reinforce} under the view of operators, in both trajectory and value-function formulations; and we discuss consequences and insights this perspective gives us. Later we discuss other PG methods under the operators perspective and how it sheds light on design choices made by these algorithms.

\subsection{Trajectory Formulation}
\label{sec:trajectory_formulation}
The proposition below formalizes, in the trajectory formulation, the idea of casting PG methods as the successive application of a policy improvement and a projection operator. It does so by presenting two operators that give rise to \OP\textsc{Reinforce}, an operator version of \RF~\citep{williams1992simple}.\footnote{\OP\RF was originally introduced by~\citet{leroux2016efficient} under the name ``Iterative PoWER''.}

\begin{restatable}[]{proposition}{optraj}
\label{proposition:operator_reinforce}
Assuming all returns $R(\tau)$ are positive, Eq.~\ref{eq:update_trajectory} can be seen as doing a gradient step to minimize $KL(R\pi_t || \pi)$ with respect to $\pi$, where $R\pi_t$ is the policy defined by
\begin{align}
    R\pi_t(\tau) &= \frac{1}{J(\pi_t)}R(\tau)\pi_t(\tau) \; .
\end{align}
Hence, the two operators associated with \OP\RF are:
\begin{align}
    \II_\tau\pi(\tau) = R\pi(\tau) \quad , \quad \PP_\tau\mu &= \arg\min_{\pi \in \Pi} KL(\mu || \pi) \; ,
\end{align}
where $\Pi$ is the set of realizable policies.
\end{restatable}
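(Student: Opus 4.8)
The plan is to show that the Reinforce update in Eq.~\ref{eq:update_trajectory} is exactly the negative gradient of $KL(R\pi_t \| \pi)$ with respect to the parameters, evaluated at $\theta = \theta_t$, so that a single ascent step on $\theta$ reproduces the update. First I would write out the KL divergence explicitly as
\begin{align}
    KL(R\pi_t \| \pi_\theta) = \int_\tau R\pi_t(\tau) \log \frac{R\pi_t(\tau)}{\pi_\theta(\tau)} \; d\tau \; ,
\end{align}
and note that the only $\theta$-dependence sits in the $-\log \pi_\theta(\tau)$ term, since $R\pi_t(\tau) = R(\tau)\pi_t(\tau)/J(\pi_t)$ is a fixed (frozen) distribution not depending on $\theta$. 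Differentiating under the integral sign therefore gives
\begin{align}
    -\frac{\partial}{\partial \theta} KL(R\pi_t \| \pi_\theta) = \int_\tau R\pi_t(\tau) \frac{\partial \log \pi_\theta(\tau)}{\partial \theta} \; d\tau \; .
\end{align}

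Next I would substitute the definition of $R\pi_t$ to obtain $\frac{1}{J(\pi_t)}\int_\tau R(\tau)\pi_t(\tau)\,\frac{\partial \log \pi_\theta(\tau)}{\partial \theta}\,d\tau$, which is precisely the Reinforce gradient of Eq.~\ref{eq:update_trajectory} up to the positive scalar $1/J(\pi_t)$; since $J(\pi_t) > 0$, this scalar can be absorbed into the stepsize $\epsilon_t$, so a gradient step to minimize the KL reproduces the update direction exactly. I would then verify that $R\pi_t$ is a genuine normalized distribution over trajectories: $\int_\tau R\pi_t(\tau)\,d\tau = \frac{1}{J(\pi_t)}\int_\tau R(\tau)\pi_t(\tau)\,d\tau = \frac{J(\pi_t)}{J(\pi_t)} = 1$, and it is nonnegative precisely because all returns $R(\tau)$ are assumed positive (this is where that hypothesis is used, and it also guarantees $J(\pi_t)>0$ so the normalization is well-defined). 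The identification of the two operators is then immediate: the improvement operator $\II_\tau$ maps $\pi$ to the reweighted distribution $R\pi$, and the projection operator $\PP_\tau$ maps a target $\mu$ to its $KL$-closest realizable policy $\arg\min_{\pi\in\Pi} KL(\mu \| \pi)$, so that taking $\mu = \II_\tau\pi_t = R\pi_t$ and performing one gradient step of the projection recovers Reinforce.

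The only genuinely delicate point is the interchange of differentiation and integration (justified under standard regularity/dominated-convergence assumptions on $\pi_\theta(\tau)$, which the paper implicitly adopts by working with true gradients) together with the observation that the improvement step deliberately \emph{detaches} $\pi_t$ from $\theta$: the target $R\pi_t$ uses the frozen current policy, so the cross-entropy gradient matches Reinforce while the entropy term $\int R\pi_t \log R\pi_t$ drops out as a $\theta$-independent constant. Everything else is routine algebra, so I expect the interchange-of-limits justification and the bookkeeping of the $1/J(\pi_t)$ normalization factor to be the main things to state carefully rather than any conceptual obstacle.
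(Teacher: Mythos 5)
Your proposal is correct and follows essentially the same route as the paper's proof: differentiate $KL(R\pi_t \| \pi_\theta)$ in $\theta$, observe that only the cross-entropy term depends on $\theta$, and recover the \textsc{Reinforce} gradient up to the positive constant $1/J(\pi_t)$ absorbed into the stepsize. Your additional checks (that $R\pi_t$ is a properly normalized distribution, and that positivity of $R$ is what makes this and $J(\pi_t)>0$ hold) are details the paper leaves implicit, but they do not change the argument.
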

We prove this proposition and the following in the Appendix.

Even in the tabular case, $R \pi$ might not be achievable when the environment is stochastic and so the projection operator $\PP_\tau$ is needed. Note that \OP\RF is different from the original \RF algorithm because it solves the projection exactly rather than doing just one step of gradient descent. Nevertheless, all stationary points of $J(\pi)$ are fixed points of \OP\textsc{Reinforce},  which maintains the following important property:
\begin{restatable}[]{proposition}{fptraj}
    \label{proposition:fixed_point_reinforce}
    $\pi(\theta^\ast)$ is a fixed point of $\PP_\tau \circ \II_\tau$.
\end{restatable}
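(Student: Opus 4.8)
The plan is to read ``fixed point'' in the sense that is natural given the gradient-based construction behind Proposition~\ref{proposition:operator_reinforce}: the projection $\PP_\tau$ acts by minimizing $KL(\II_\tau\pi_{\theta^\ast} \,\|\, \pi_\theta)$ over $\theta$, so $\pi(\theta^\ast)$ is a fixed point of $\PP_\tau \circ \II_\tau$ exactly when $\theta^\ast$ is a stationary point of this projection objective (gradient descent started at $\theta^\ast$ does not move, so $\pi(\theta^\ast)$ is returned). I would therefore reduce the claim to showing
\[
\nabla_\theta\, KL\big(\II_\tau\pi_{\theta^\ast} \,\big\|\, \pi_\theta\big)\Big|_{\theta = \theta^\ast} = 0 .
\]

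The key step is a direct differentiation. Expanding the divergence, the entropy term $\int_\tau \II_\tau\pi_{\theta^\ast}(\tau)\log \II_\tau\pi_{\theta^\ast}(\tau)\,d\tau$ is independent of $\theta$, so only the cross-entropy term survives and, differentiating under the integral,
\[
\nabla_\theta\, KL\big(\II_\tau\pi_{\theta^\ast}\,\big\|\,\pi_\theta\big) = -\int_\tau \II_\tau\pi_{\theta^\ast}(\tau)\, \nabla_\theta \log \pi_\theta(\tau)\, d\tau .
\]
Substituting $\II_\tau\pi_{\theta^\ast}(\tau) = R\pi_{\theta^\ast}(\tau) = R(\tau)\pi_{\theta^\ast}(\tau)/J(\pi_{\theta^\ast})$ and evaluating at $\theta = \theta^\ast$ gives
\[
\nabla_\theta\, KL\big(\II_\tau\pi_{\theta^\ast}\,\big\|\,\pi_\theta\big)\Big|_{\theta=\theta^\ast} = -\frac{1}{J(\pi_{\theta^\ast})}\int_\tau \pi_{\theta^\ast}(\tau)R(\tau)\,\nabla_\theta\log\pi_\theta(\tau)\Big|_{\theta=\theta^\ast} d\tau = -\frac{1}{J(\pi_{\theta^\ast})}\,\nabla_\theta J(\pi_\theta)\Big|_{\theta=\theta^\ast},
\]
where the last equality is precisely the policy-gradient identity underlying Eq.~\ref{eq:update_trajectory}. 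The standing assumption that all returns are positive guarantees $J(\pi_{\theta^\ast}) > 0$, so the prefactor is well defined.

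To finish, I would invoke that $\theta^\ast = \arg\max_\theta J(\pi_\theta)$ is a stationary point of $J$, hence $\nabla_\theta J(\pi_\theta)|_{\theta=\theta^\ast} = 0$, which forces the projection gradient above to vanish and establishes the fixed-point property. The main subtlety I expect, and would flag explicitly, is the meaning of ``fixed point'': the computation shows that $\theta^\ast$ is a stationary point of the projection objective, not a priori the global $\arg\min$ that defines $\PP_\tau$. This is the intended reading, consistent with the earlier assertion that \emph{all stationary points of $J$ are fixed points of \OP\RF}, and it matches the fact that \OP\RF realizes the projection through gradient steps rather than exact global minimization; upgrading the conclusion to the literal $\arg\min$ would additionally require convexity or uniqueness of the minimizer of $\theta \mapsto KL(R\pi_{\theta^\ast}\,\|\,\pi_\theta)$, which need not hold for a general parameterized class $\Pi$.
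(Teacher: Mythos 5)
Your proposal is correct and follows essentially the same route as the paper's proof: differentiate the projection objective $KL(\II_\tau\pi(\theta^\ast) \,\|\, \pi_\theta)$ in $\theta$, recognize the result as (a positive multiple of) the policy gradient of $J$ at $\theta^\ast$, and conclude it vanishes by optimality of $\theta^\ast$. Your extra care about the normalization $1/J(\pi(\theta^\ast))$ and the remark that this establishes stationarity of the projection objective rather than a global $\arg\min$ are both consistent with the paper's intended (and equally terse) reading of ``fixed point.''
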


\subsection{State-Action Formulation}
\label{sec:value_formulation}
While a policy was defined as a distribution over trajectories in the trajectory formulation, it will be defined as a stationary distribution over states and action in the state-action formulation. Similar to the trajectory formulation, the policy improvement step can lead to policies which are not realizable.

We saw in Section~\ref{sec:trajectory_formulation} that doing the full projection implied by the operators leads to \OP\textsc{Reinforce}, which is slightly different from \RF. Similarly, although the policy gradient theorem states that the updates of Eq.~\ref{eq:update_trajectory} and Eq.~\ref{eq:update_state-action} are identical, the resulting operators will be different:
\begin{restatable}[]{proposition}{opvalue}
\label{proposition:operator_state-action}
If all $Q^\pi(s,a)$ are positive, Eq.~\ref{eq:update_state-action} can be seen as doing a gradient step to minimize
\begin{align}
    \label{eq:equivalent_step_state-action}
    D_{V^{\pi_t}\pi_t}(Q^{\pi_t}\pi_t || \pi) &= \sum_s d^{\pi_t}(s) V^{\pi_t}(s) KL(Q^{\pi_t}\pi_t || \pi) \; ,
\end{align}
where $D_{V^{\pi_t}\pi_t}$ and the distribution $Q^\pi\pi$ over actions are defined as 
\begin{align}
    D_{z}(\mu || \pi)   &= \sum_s z(s) KL\big(\mu(\cdot | s) || \pi(\cdot | s)\big) \; ,\\
    Q^\pi\pi(a | s)   &= \frac{1}{\sum_{a'} Q^\pi(s, a') \pi(a' | s)} Q^\pi(s, a)\pi(a | s) = \frac{1}{V^\pi(s)}Q^\pi(s, a)\pi(a | s) \; .
\end{align}

Hence, the two operators associated with the state-action formulation are:
\begin{align}
    \II_V\pi(s,a) &= \left(\frac{1}{\mathbb{E}_\pi[V^\pi]}d^\pi(s) V^\pi(s)\right) Q^\pi\pi(a | s)\label{eq:i_op_sv}\\
    \PP_{V}\mu &= \arg\min_{z \in \Pi} \sum_s \mu(s) KL\big(\mu(\cdot | s) || z(\cdot | s)\big) \label{eq:p_op_sv}.
\end{align}
\end{restatable}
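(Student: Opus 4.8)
The plan is to follow the template already used for the trajectory formulation in Proposition~\ref{proposition:operator_reinforce}: I will show that a single descent step on the weighted divergence in Eq.~\ref{eq:equivalent_step_state-action} reproduces the \RF update of Eq.~\ref{eq:update_state-action}, and then verify that this divergence is, up to a positive constant, the projection objective of Eq.~\ref{eq:p_op_sv} applied to the improved policy $\II_V\pi_t$. The one preliminary check is that $Q^\pi\pi(\cdot\,|\,s)$ is a genuine conditional distribution: positivity of the $Q^\pi(s,a)$ guarantees nonnegative mass, and the standard identity $\sum_{a'} Q^\pi(s,a')\pi(a'|s) = V^\pi(s)$, already invoked in the statement, supplies the normalization, so the denominator defining $Q^\pi\pi$ is exactly $V^\pi(s)$.

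For the core computation I would expand the objective as
\begin{align}
D_{V^{\pi_t}\pi_t}(Q^{\pi_t}\pi_t || \pi) = \sum_s d^{\pi_t}(s)\, V^{\pi_t}(s) \sum_a Q^{\pi_t}\pi_t(a|s)\,\log\frac{Q^{\pi_t}\pi_t(a|s)}{\pi(a|s)} \; ,
\end{align}
treating the target $Q^{\pi_t}\pi_t$ and the weights $d^{\pi_t}(s)V^{\pi_t}(s)$ as fixed quantities evaluated at the current iterate $\pi_t$; this freezing is precisely the improvement step encoded by $\II_V$. Differentiating with respect to $\theta$, only the $-\log\pi(a|s)$ term survives, yielding $-\sum_s d^{\pi_t}(s) V^{\pi_t}(s) \sum_a Q^{\pi_t}\pi_t(a|s)\,\partial_\theta \log\pi_\theta(a|s)$. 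Substituting $Q^{\pi_t}\pi_t(a|s) = Q^{\pi_t}(s,a)\pi_t(a|s)/V^{\pi_t}(s)$ then cancels the $V^{\pi_t}(s)$ factor, leaving exactly the negative of the \RF direction of Eq.~\ref{eq:update_state-action}; a gradient descent step on the divergence is therefore identical to the \RF ascent step.

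Finally, to read off the operators I would verify that $\II_V\pi$ in Eq.~\ref{eq:i_op_sv} is a valid joint distribution over $(s,a)$: its conditional is $Q^\pi\pi(a|s)$, and its state marginal $d^\pi(s)V^\pi(s)/\E_\pi[V^\pi]$ sums to one precisely because $\E_\pi[V^\pi] = \sum_s d^\pi(s)V^\pi(s)$. Writing $\mu = \II_V\pi_t$, the projection objective $\sum_s \mu(s)\,KL(\mu(\cdot|s)||\pi(\cdot|s))$ of Eq.~\ref{eq:p_op_sv} equals $D_{V^{\pi_t}\pi_t}(Q^{\pi_t}\pi_t||\pi)$ divided by the constant $\E_{\pi_t}[V^{\pi_t}]$, confirming that $\PP_V\circ\II_V$ implements the same update. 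I expect the delicate points to be bookkeeping rather than conceptual: keeping the frozen target and weights out of the gradient (the stop-gradient that distinguishes the operator view from naive differentiation of $J$), and tracking the normalization constant $\E_{\pi_t}[V^{\pi_t}]$, which rescales the effective step size but leaves the gradient direction, and hence the equivalence with \RF, unchanged.
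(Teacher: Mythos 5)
Your proposal is correct and follows essentially the same route as the paper's proof: differentiate the weighted divergence with the target $Q^{\pi_t}\pi_t$ and weights $d^{\pi_t}(s)V^{\pi_t}(s)$ held fixed, so that only the $-\log\pi(a|s)$ term contributes, then substitute $Q^{\pi_t}\pi_t(a|s) = Q^{\pi_t}(s,a)\pi_t(a|s)/V^{\pi_t}(s)$ to cancel the $V^{\pi_t}(s)$ factor and recover Eq.~\ref{eq:update_state-action}. Your additional bookkeeping (verifying that $Q^\pi\pi$ and $\II_V\pi$ are properly normalized, and that the projection objective of Eq.~\ref{eq:p_op_sv} applied to $\II_V\pi_t$ equals the divergence up to the constant $\mathbb{E}_{\pi_t}[V^{\pi_t}]$) is correct and only makes explicit what the paper leaves implicit.
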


The improvement operator $\II_V$ affects both the distribution over states, where it increases the probabilities of states $s$ with large values $V(s)$, and the conditional distribution over actions given states, where it increases the probabilities of actions $a$ with large values $Q(s, a)$.

The projection operator is not the KL-divergence over the full distribution over state-action pairs. Rather, it treats each state independently, weighting them using the distribution over states of its first argument.
In the tabular case, the optimum may be found immediately and is independent of the distribution over states, i.e. $\PP_{V}\mu(a | s) = \mu(a | s)$.
\begin{restatable}[]{proposition}{fpvalue}
    \label{proposition:fixed_point_reinforce_value}
    $\pi(\theta^\ast)$ is a fixed point of $\PP_{V} \circ \II_V$.
\end{restatable}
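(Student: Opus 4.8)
The plan is to mirror the argument behind Proposition~\ref{proposition:fixed_point_reinforce}: I would show that the gradient of the projection objective defining $\PP_V$, evaluated at the target $\II_V\pi(\theta^\ast)$ and at the parameter $\theta^\ast$, reduces (up to a positive scalar) to the \RF update direction of Eq.~\ref{eq:update_state-action}, which by the policy gradient theorem is $\nabla_\theta J(\pi_\theta)$. Since $\theta^\ast$ maximizes $J$, this gradient vanishes, so a projection step leaves $\theta^\ast$ unchanged and $\pi(\theta^\ast)$ is a fixed point of $\PP_V \circ \II_V$ in the stationary-point sense introduced earlier in the paper.

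First I would fix $\pi^\ast := \pi(\theta^\ast)$ and form the target $\mu := \II_V\pi^\ast$. By Eq.~\ref{eq:i_op_sv} its state marginal and action conditional are
\begin{align}
\mu(s) = \frac{d^{\pi^\ast}(s)\, V^{\pi^\ast}(s)}{\E_{\pi^\ast}[V^{\pi^\ast}]} \; , \qquad \mu(a \mid s) = Q^{\pi^\ast}\pi^\ast(a \mid s) = \frac{Q^{\pi^\ast}(s,a)\,\pi^\ast(a \mid s)}{V^{\pi^\ast}(s)} \; .
\end{align}
The crucial observation is that the $V^{\pi^\ast}(s)$ normalizer in $\mu(s)$ cancels the one appearing in $\mu(a\mid s)$, so the joint weight collapses to
\begin{align}
\mu(s)\,\mu(a \mid s) = \frac{d^{\pi^\ast}(s)\, Q^{\pi^\ast}(s,a)\, \pi^\ast(a \mid s)}{\E_{\pi^\ast}[V^{\pi^\ast}]} \; .
\end{align}
This cancellation is what ties the state-weighted projection of Eq.~\ref{eq:p_op_sv} back to the ordinary policy gradient, and getting it right is the one genuine step of the argument.

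Next I would differentiate the projection objective $D_\mu(\mu \| \pi_\theta) = \sum_s \mu(s)\, KL(\mu(\cdot\mid s) \| \pi_\theta(\cdot\mid s))$; since $\mu$ is held fixed and only $\log\pi_\theta(a\mid s)$ depends on $\theta$,
\begin{align}
\frac{\partial}{\partial \theta} D_\mu(\mu \| \pi_\theta) &= -\sum_s\sum_a \mu(s)\,\mu(a\mid s)\, \frac{\partial \log \pi_\theta(a\mid s)}{\partial \theta} \notag\\
&= -\frac{1}{\E_{\pi^\ast}[V^{\pi^\ast}]} \sum_s d^{\pi^\ast}(s) \sum_a \pi^\ast(a\mid s)\, Q^{\pi^\ast}(s,a)\, \frac{\partial \log \pi_\theta(a \mid s)}{\partial \theta} \; ,
\end{align}
where the inner double sum is exactly the update direction of Eq.~\ref{eq:update_state-action}, i.e. $\nabla_\theta J(\pi_\theta)$. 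Evaluating at $\theta = \theta^\ast$ and using that $\theta^\ast = \arg\max_\theta J(\pi_\theta)$ gives $\nabla_\theta J(\pi_\theta)\big|_{\theta^\ast} = 0$; since all $Q^\pi$ (hence $V^\pi$ and $\E_{\pi^\ast}[V^{\pi^\ast}]$) are assumed positive, the prefactor is finite and nonzero, so the projection gradient vanishes and $\pi(\theta^\ast)$ is a fixed point of $\PP_V \circ \II_V$. The only subtlety to flag is the convention already adopted in the paper of identifying a ``fixed point'' with a stationary point of the gradient-based projection, so that a vanishing gradient suffices; in the tabular case this is even sharper, since $\PP_V\mu(a\mid s) = \mu(a\mid s)$ reduces the fixed-point condition to $Q^{\pi^\ast}(s,a) = V^{\pi^\ast}(s)$ on the support of $\pi^\ast$, which holds at the optimum.
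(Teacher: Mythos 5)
Your proof is correct and takes essentially the same approach as the paper's: differentiate the state-weighted KL projection objective at the target $\II_V\pi(\theta^\ast)$, use the cancellation of $V^{\pi^\ast}(s)$ between the state weight and the conditional $Q^{\pi^\ast}\pi^\ast(a\mid s)$ so that the gradient reduces to the policy-gradient update of Eq.~\ref{eq:update_state-action}, and conclude that it vanishes at $\theta = \theta^\ast$ by optimality of $\pi(\theta^\ast)$. The only cosmetic difference is that you carry the normalization constant $\mathbb{E}_{\pi^\ast}[V^{\pi^\ast}]$ explicitly while the paper differentiates the unnormalized weighted objective $\sum_s d^{\pi^\ast}(s)V^{\pi^\ast}(s)KL(Q^{\pi^\ast}\pi^\ast \| \pi)$, which differs only by a positive constant and so does not affect stationarity.
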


Now that we derived operator versions of \textsc{Reinforce}, further bridging the gap between policy-based and value-based methods, we study the properties of these operators.

\subsection{\texorpdfstring{$\II_\tau\pi$}{I} can be arbitrarily close to \texorpdfstring{$\pi$}{pi}}
\label{sec:i_step_performance}

In value-based methods, the Bellman optimality operator is well-known to be a $\gamma$-contraction where $\gamma$ is the discount factor, leading to a linear convergence rate of the value function in the tabular case~\citep{Puterman1994,Bertsekas96}. In contrast, for policy gradient methods, the improvement operator in the trajectory formulation, $\II_\tau$, can produce arbitrarily small improvements, as formalized in the proposition below.
\begin{restatable}[]{proposition}{improvtraj}
    \label{proposition:improvement_trajectory}
    The performance of the improved policy $\II_\tau\pi$ is given by
    \begin{align}
    J(\II_\tau\pi) &= J(\pi) \left(1 + \frac{\textrm{Var}_\pi(R)}{(\mathbb{E}_\pi[R])^2}\right) \geq J(\pi).
    \end{align}
\end{restatable}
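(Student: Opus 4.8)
The plan is to compute $J(\II_\tau\pi)$ directly from the definition of the improvement operator $\II_\tau\pi(\tau) = R\pi(\tau) = \frac{1}{J(\pi)}R(\tau)\pi(\tau)$ established in Proposition~\ref{proposition:operator_reinforce}. Here I treat the ``policy'' as a distribution over trajectories, so $J(\pi) = \int_\tau R(\tau)\pi(\tau)\,d\tau = \mathbb{E}_\pi[R]$, and likewise $J(\II_\tau\pi) = \int_\tau R(\tau)\,\II_\tau\pi(\tau)\,d\tau$. The key observation is that substituting the definition of $\II_\tau\pi$ turns the return of the improved policy into a second-moment expression under the original policy.

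First I would substitute and pull out the normalizing constant:
\begin{align}
    J(\II_\tau\pi) &= \int_\tau R(\tau)\,\frac{1}{J(\pi)}R(\tau)\pi(\tau)\,d\tau = \frac{1}{J(\pi)}\int_\tau R(\tau)^2\pi(\tau)\,d\tau = \frac{\mathbb{E}_\pi[R^2]}{\mathbb{E}_\pi[R]} \; .
\end{align}
Next I would invoke the identity $\mathbb{E}_\pi[R^2] = \Var_\pi(R) + (\mathbb{E}_\pi[R])^2$ and use $J(\pi) = \mathbb{E}_\pi[R]$ to rewrite
\begin{align}
    J(\II_\tau\pi) &= \frac{\Var_\pi(R) + (\mathbb{E}_\pi[R])^2}{\mathbb{E}_\pi[R]} = \mathbb{E}_\pi[R]\left(1 + \frac{\Var_\pi(R)}{(\mathbb{E}_\pi[R])^2}\right) = J(\pi)\left(1 + \frac{\Var_\pi(R)}{(\mathbb{E}_\pi[R])^2}\right) \; ,
\end{align}
which is exactly the claimed formula. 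The inequality $J(\II_\tau\pi) \geq J(\pi)$ then follows immediately, since the variance is nonnegative and, by the standing assumption that all returns $R(\tau)$ are positive, $J(\pi) = \mathbb{E}_\pi[R] > 0$, so the multiplicative factor is at least $1$.

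The derivation is essentially a one-line second-moment calculation, so there is no serious obstacle; the only points requiring care are bookkeeping ones. I would want to confirm that $\II_\tau\pi$ is a genuine probability distribution over trajectories (i.e.\ that $R\pi$ integrates to one), which holds precisely because of the normalization by $J(\pi)$, and that the positivity assumption guarantees $R\pi \geq 0$ so the object is well defined. I would also note that the equality case---when the improvement is zero---occurs exactly when $\Var_\pi(R) = 0$, i.e.\ when the return is deterministic under $\pi$; this makes explicit how $\II_\tau$ can yield arbitrarily small improvements whenever the return distribution is nearly concentrated, which is the qualitative point the proposition is meant to illustrate in contrast with the $\gamma$-contraction of the Bellman optimality operator.
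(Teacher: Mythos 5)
Your proof is correct and follows essentially the same route as the paper: substitute the definition of $\II_\tau\pi$, recognize $J(\II_\tau\pi) = \mathbb{E}_\pi[R^2]/\mathbb{E}_\pi[R]$, and apply the variance decomposition $\mathbb{E}_\pi[R^2] = \Var_\pi(R) + (\mathbb{E}_\pi[R])^2$. The only difference is that the paper carries out this computation for a general reweighting $z\pi$ (obtaining a covariance term $\mathrm{Cov}_\pi(R,z)$) so that the same argument also proves Proposition~\ref{proposition:improvement_reinforce_transform}, and then specializes to $z = R$, which is exactly your calculation.
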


If $\pi$ is almost deterministic and the environment is deterministic, then we have $\textrm{Var}_\pi(R) \approx 0$ and $J(\II_\tau\pi) \approx J(\pi)$. This result justifies the general intuition that deterministic policies can be dangerous for PG methods; not because they may perform poorly, but because they stall the learning process~\citep{schaul2019ray}. In that sense, entropy regularization can be seen as helping the algorithm make consistent progress. Moreover, note that the improvement operator $\II_\tau$ is weaker than the equivalent operator for value-based methods, which can be seen as a consequence of the smoothness of change in the policies of PG methods when compared to the abrupt changes that can occur in value-based methods.

\subsection{A lower bound on the overall improvement}
\label{sec:lower_bounds}
Although we established that the improvement operator leads to a policy achieving higher return, it could still be the case that the projection operator $\PP$ annihilates all these gains, leading $\PP\circ\II\pi$ to having a smaller expected return than $\pi$. The proposition below derives a lower bound for the difference in expected returns, proving this cannot be the case, even when the projection is not computed exactly.

\begin{restatable}[]{proposition}{lbvalue}
\label{proposition:lower_bound_state-action}
For any two policies $\pi$ and $\mu$ such that the support of $\mu$ covers that of $\pi$, we have
\begin{align}
J(\pi) &\geq J(\mu) + \mathbb{E}_\mu[V^\mu(s)][D_{\mu}(\II_V\mu || \mu) - D_{\mu}(\II_V\mu || \pi)]\label{eq:lower_bound_kl}\\
&= J(\mu) + \sum_s d^\mu(s)\sum_a Q^\mu(s, a)\mu(a | s) \log\frac{\pi(a | s)}{\mu(a | s)} \; .\label{eq:lower_bound_explicit}
\end{align}
Hence, any policy $\pi$ such that $D_{\pi_t}(\II_V\pi_t || \pi) < D_{\pi_t}(\II_V\pi_t || \pi_t)$ implies $J(\pi) > J(\pi_t)$.
\end{restatable}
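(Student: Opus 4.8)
The plan is to dispose of the two mechanical parts first and then concentrate on the genuine inequality. The passage from \eqref{eq:lower_bound_kl} to \eqref{eq:lower_bound_explicit} is pure bookkeeping: expanding the difference of KL terms gives $D_\mu(\II_V\mu \| \mu) - D_\mu(\II_V\mu \| \pi) = \sum_s w(s)\sum_a Q^\mu\mu(a|s)\log\frac{\pi(a|s)}{\mu(a|s)}$, and substituting the conditional $Q^\mu\mu(a|s)=Q^\mu(s,a)\mu(a|s)/V^\mu(s)$ from \eqref{eq:p_op_sv} together with the state weighting $w(s)=d^\mu(s)V^\mu(s)/\mathbb{E}_\mu[V^\mu]$ inherited from the first argument $\II_V\mu$ makes the per-state $V^\mu(s)$ cancel; multiplying by the scalar $\mathbb{E}_\mu[V^\mu]$ then cancels the normalisation and leaves exactly \eqref{eq:lower_bound_explicit}. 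The final ``Hence'' is immediate once \eqref{eq:lower_bound_kl} is established: set $\mu=\pi_t$, note that $\mathbb{E}_{\pi_t}[V^{\pi_t}]>0$ under the standing positivity assumption on $Q$, and observe that $D_{\pi_t}(\II_V\pi_t\|\pi)<D_{\pi_t}(\II_V\pi_t\|\pi_t)$ makes the bracketed factor strictly positive.

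For the inequality itself, the engine is the elementary bound $\log u\le u-1$, used as $\mu(a|s)\log\frac{\pi(a|s)}{\mu(a|s)}\le \pi(a|s)-\mu(a|s)$, which is exactly where the support hypothesis on $\mu$ is needed so that the logarithm stays finite. Multiplying by $Q^\mu(s,a)\ge 0$ and summing over $a$ shows that the per-state log term $\ell(s):=\sum_a Q^\mu(s,a)\mu(a|s)\log\frac{\pi(a|s)}{\mu(a|s)}$ is dominated by the advantage-style surrogate $g(s):=\sum_a\pi(a|s)A^\mu(s,a)$, where $A^\mu=Q^\mu-V^\mu$. I would then write the value-difference recursion $V^\pi-V^\mu=g+\gamma P^{\pi}(V^\pi-V^\mu)$, with $P^{\pi}$ the one-step transition operator under $\pi$, invert it as $V^\pi-V^\mu=(I-\gamma P^{\pi})^{-1}g$, and use that $(I-\gamma P^{\pi})^{-1}=\sum_{t\ge 0}\gamma^t(P^{\pi})^t$ is an order-preserving non-negative operator to lift the pointwise bound $g\ge\ell$ to $V^\pi-V^\mu\ge(I-\gamma P^{\pi})^{-1}\ell$; integrating against $d_0$ produces a difference of returns of the desired shape.

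The main obstacle is the visitation measure. The recursion above naturally surfaces the \emph{new} policy's discounted occupancy, whereas the statement is expressed through the \emph{base} policy's occupancy $d^\mu$ and through $Q^\mu$, which are precisely the quantities estimable from samples of $\mu$. These are not interchangeable: the raw advantage surrogate $\sum_s d^\mu(s)\,g(s)$ is \emph{not} in general a global lower bound on $J(\pi)-J(\mu)$ (this is the familiar trust-region gap), so the logarithm is doing essential work, shrinking the surrogate just enough to absorb the occupancy mismatch. Concretely, if one instead organises the recursion around $P^\mu$ in order to make $d^\mu$ appear directly, the surrogate is forced to carry $Q^\pi$ rather than $Q^\mu$, and the crux becomes controlling $Q^\pi$ against $Q^\mu$ and showing that the slack between $\pi-\mu$ and $\mu\log\frac{\pi}{\mu}$, once propagated through the non-negative discounted operator, dominates this difference. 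I expect this reconciliation to be the only non-routine step, and I would verify it carefully before treating the rest as mechanical.
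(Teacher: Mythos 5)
Your handling of the two mechanical parts is fine: the equivalence of Eq.~\ref{eq:lower_bound_kl} and Eq.~\ref{eq:lower_bound_explicit} is indeed bookkeeping, and the ``Hence'' clause follows immediately once the inequality is in hand. But the core inequality is not established, and the obstacle you flag at the end is not a step to ``verify carefully'' --- it is the entire content of the proposition. Your engine (the bound $\mu(a|s)\log\frac{\pi(a|s)}{\mu(a|s)} \le \pi(a|s)-\mu(a|s)$, multiplied by $Q^\mu(s,a)\ge 0$, then lifted through the nonnegative operator $(I-\gamma P^\pi)^{-1}$) proves exactly $J(\pi)-J(\mu) \ge \sum_s d^\pi(s)\,\ell(s)$, a bound weighted by the \emph{new} policy's occupancy. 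Since your $\ell(s)$ has no definite sign, there is no order relation between $\sum_s d^\pi(s)\ell(s)$ and $\sum_s d^\mu(s)\ell(s)$, so nothing in the argument transfers the bound to the $d^\mu$ weighting demanded by the statement. Your fallback route --- organizing the recursion around $P^\mu$, which via the Kakade--Langford identity forces $Q^\pi$ into the surrogate, and then ``controlling $Q^\pi$ against $Q^\mu$'' --- is precisely where the proposal stalls: $Q^\pi$ depends on the entire future behaviour of $\pi$, and there is no pointwise comparison between $\sum_a Q^\pi(s,a)[\pi(a|s)-\mu(a|s)]$ and $\ell(s)$ that survives propagation; closing that gap would amount to re-deriving the proposition from scratch.

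The paper closes the gap with a different decomposition, applied \emph{before} any value recursion. It writes $J(\pi)=\sum_{h} \gamma^h \int_\tau r(s_h,a_h)\,\pi_h(\tau_h)\,d\tau$, where $\pi_h(\tau_h)$ is the probability mass of trajectories sharing the prefix $\tau_h$, importance-weights each horizon term to $\mu_h$, and applies $x \ge 1+\log x$ to the \emph{prefix ratio} $\pi_h(\tau_h)/\mu_h(\tau_h)$; nonnegativity of the rewards preserves the inequality term by term. Because the whole expression remains an expectation under $\mu$, the occupancy that appears is automatically $d^\mu$; and when the prefix log-ratio is expanded into per-step action log-ratios (the dynamics terms cancel) and the sums over $h$ and $h'$ are exchanged, the rewards beyond step $h'$ aggregate into $\gamma^{h'}Q^\mu(s_{h'},a_{h'})$, yielding exactly Eq.~\ref{eq:lower_bound_explicit} (the paper packages this as an auxiliary objective $J_\mu$ and pins down the additive constant by evaluating at $\pi=\mu$). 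That trajectory-prefix application of the log inequality, one horizon step at a time, is the missing idea; if you want to salvage your route, apply your elementary bound not to the one-step conditionals but to the cumulative ratios $\prod_{h'\le h}\pi(a_{h'}|s_{h'})/\mu(a_{h'}|s_{h'})$ inside each reward term, which reduces your approach to the paper's.
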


While Eq.~\ref{eq:lower_bound_kl} makes explicit the relationship between the divergence with the improved policy and the expected return, Eq.~\ref{eq:lower_bound_explicit} is reminiscent of the identity of~\citet{kakade2002approximately}:
\begin{align*}
    J(\pi) &= J(\mu) + \sum_s d^\mu(s) \sum_a Q^\pi(s, a) [\pi(a|s) - \mu(a|s)] \; ,
\end{align*}
but with $Q^\mu$ replacing $Q^\pi$, making it easier to estimate using samples from $\mu$, and $\log \pi(a|s)$ instead of $\pi(a|s)$, making the optimization easier. The price to pay, however, is the loss of the equality, replaced with a lower bound.

Fig.~\ref{fig:lower_bounds} (Appendix \ref{appendix:exp_details}) compares our lower bound to the surrogate approximation used by conservative policy iteration (CPI) \citep{kakade2002approximately}, which provides the theoretical motivation for TRPO \citep{schulman2015trust} and PPO \citep{schulman2017proximal}. Although both are equivalent to first-order terms, matching the value and first derivative of $J$ for $\pi=\pi_t$, the CPI approximation is not a bound on the true objective and only guarantees improvement of the original objective for small stepsizes, unlike our global lower bound.

\subsection{Optimal off-policy sampling distribution}
When training an agent, it is commonly assumed that, when feasible, sampling trajectories from the current policy is better than using samples generated from another policy. We show here that this is not always the case and that, regardless of the current policy $\pi$, it can be beneficial to sample trajectories from another policy.

Applying $\PP \circ \II$ is equivalent to minimizing the divergence between a fixed policy $\II \pi_t$ and the current policy $\pi$. The choice of $\II \pi_t$ stems from its guaranteed improvement over $\pi_t$ but, because $J(\II\mu) \geq J(\mu)$ for any policy $\mu$, we might try to find a good policy $\mu$ then minimize the divergence between $\II\mu$ and $\pi$. Looking at the formulation in Eq.~\ref{eq:p_op_sv}, we see that this would be equivalent to repeatedly applying the policy gradient update, but to samples drawn from $\mu$ rather than the current policy. In that regard, the update would be equivalent to that of \emph{off-policy} policy gradient methods, without correcting using importance weights, hence leading to a biased estimate of the gradient. \citet{schaul2019ray} partially explored this biased gradient, and report that off-policy sampling sometimes works better than on-policy sampling. %

Because $\pi(\theta^\ast)$ is a fixed point of $\PP \circ \II$, minimizing Eq.~\ref{eq:p_op_sv} with $\mu = \II\pi(\theta^\ast)$ will lead to the optimal policy. Hence, the optimal strategy, when the current policy is \emph{any} policy $\pi$, is to draw samples from $\pi(\theta^\ast)$ instead and apply the biased policy gradient updates.\footnote{Assuming we can find the optimum of the projection, which is the case when the class of policies belongs to the exponential family.} Although this result is of no practical interest since it requires knowing $\pi(\theta^\ast)$ in advance, it proves that there are better sampling distributions than the current policy and that, in some sense, off-policy learning without importance correction is ``optimal'' when sampling from the optimal policy.

\section{Other policy gradient methods under the operators perspective}
Now that we have shown that \RF can be cast as iteratively applying two operators, we might wonder whether other operators could be used instead. In this section, we explore the use of other policy improvement and projection operators. We shall see that there are two main categories of transformation: the first one performs a nonlinear transformation of the rewards in the hope of reaching faster convergence; the second changes the distribution over state-action pairs and possibly the ordering of the KL divergence. With this perspective, we recover operators that give rise to PPO~\citep{schulman2017proximal} and MPO~\citep{abdolmaleki2018maximum} to shed some light on what these methods truly accomplish.

\subsection{Moving beyond returns}
\label{sec:beyond_rewards}
While \RF improves the policy by weighting the policy by the returns, one might wonder if we can potentially substitute in other non-linear transformations of the return to speed up the learning process. Intuitively, policies at the beginning of training are usually of such poor quality that asking them to focus solely on the highest-return trajectories may lead to larger improvement steps. As such, one might wonder if policy improvement operators that transform the return can lead to faster convergence and, if so, whether the policy at convergence remains optimal. We discuss two such transformations that place increased emphasis on the highest-return trajectories. 

\subsubsection{Polynomial returns}
\label{sec:accumulating_improvement_steps}
Since, in the trajectory formulation, the improvement step consists in multiplying the probability of each trajectory by its associated return, one might wonder what would happen if we instead used the return raised to the $k$-th power, i.e. replacing $\II_\tau$ by $\II_\tau^k: \; \pi \longrightarrow R^k\pi$.
Larger values of $k$ place higher emphasis on high-return trajectories and, as $k$ grows to infinity, $R^k\pi$ becomes the deterministic distribution that assigns probability 1 to the trajectory achieving the highest return.\footnote{If there are multiple such trajectories, this is a uniform distribution over all of them.} However, for any $k \neq 1$, $\pi(\theta^\ast)$ may not be a fixed point of $\PP_\tau \circ \II_\tau^k$, since projecting a policy that achieves a higher expected return can still lead to a worse policy. Thankfully, the following proposition allows us to address the issue by changing the projection operator accordingly:
\begin{restatable}[]{proposition}{fpalphatraj}
\label{proposition:fixed_point_alpha}
Let $\alpha \in (0, 1)$. Then $\pi(\theta^\ast)$ is a fixed point of $\PP_\tau^\alpha \circ \II_\tau^{\frac{1}{\alpha}}$ with $\PP_\tau^\alpha$ defined by
\begin{align}
    \PP_\tau^\alpha \mu &= \arg\min_{\pi \in \Pi} D^\alpha(\mu || \pi) \; ,
\end{align}
where $D^\alpha$ is the $\alpha$-divergence or R\'enyi divergence of order $\alpha$.
\end{restatable}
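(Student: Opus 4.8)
The plan is to show that $\pi(\theta^\ast)$ is a stationary point of the projection objective $\theta \mapsto D^\alpha(\mu \,\|\, \pi_\theta)$ when $\mu = \II_\tau^{1/\alpha}\pi(\theta^\ast)$; since $\PP_\tau^\alpha$ returns the minimizer of this objective over $\Pi$, stationarity at $\theta^\ast$ is exactly what ``fixed point of $\PP_\tau^\alpha \circ \II_\tau^{1/\alpha}$'' requires. First I would write the improved policy explicitly. Using the convention $\II_\tau^k\pi \propto R^k\pi$ and the positivity of returns assumed throughout (so $R^{1/\alpha}$ is well defined), we have
\[
\mu(\tau) = \big(\II_\tau^{1/\alpha}\pi(\theta^\ast)\big)(\tau) = \frac{R(\tau)^{1/\alpha}\,\pi_{\theta^\ast}(\tau)}{Z}, \qquad Z = \mathbb{E}_{\pi_{\theta^\ast}}\!\big[R^{1/\alpha}\big].
\]

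Next I would reduce both admissible definitions of $D^\alpha$ to the single kernel $\int_\tau \mu(\tau)^\alpha \pi_\theta(\tau)^{1-\alpha}\, d\tau$: Amari's $\alpha$-divergence and the R\'enyi divergence of order $\alpha$ differ only by a monotone rescaling of this integral (a constant minus it, or its logarithm), so their gradients in $\theta$ are proportional and vanish simultaneously. Differentiating the kernel gives
\[
\frac{\partial}{\partial\theta} D^\alpha(\mu \,\|\, \pi_\theta) \;\propto\; -\int_\tau \mu(\tau)^\alpha\, \pi_\theta(\tau)^{1-\alpha}\, \frac{\partial \log \pi_\theta(\tau)}{\partial\theta}\, d\tau,
\]
where the sign and the positive $\alpha$-dependent prefactor affect neither the location of the zero nor the argument below.

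The key step is to evaluate this at $\theta = \theta^\ast$, where the exponents conspire. Since $\mu^\alpha \propto R\,\pi_{\theta^\ast}^\alpha$, we get $\mu(\tau)^\alpha \pi_{\theta^\ast}(\tau)^{1-\alpha} \propto R(\tau)\,\pi_{\theta^\ast}(\tau)$, the $\alpha$ and $1-\alpha$ powers of $\pi_{\theta^\ast}$ recombining into a single factor. The gradient therefore becomes proportional to
\[
\int_\tau R(\tau)\,\pi_{\theta^\ast}(\tau)\, \frac{\partial \log \pi_\theta(\tau)}{\partial\theta}\bigg|_{\theta=\theta^\ast}\! d\tau \;=\; \frac{\partial J(\pi_\theta)}{\partial\theta}\bigg|_{\theta=\theta^\ast},
\]
which is exactly the \RF gradient of Eq.~\ref{eq:update_trajectory} and vanishes because $\theta^\ast$ maximizes $J$. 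This cancellation is precisely what motivates pairing $\II_\tau^{1/\alpha}$ with $\PP_\tau^\alpha$: the $1/\alpha$ tempering of the return is undone by the $\alpha$ tempering of $\mu$ inside the divergence. Taking $\alpha \to 1$ recovers Proposition~\ref{proposition:fixed_point_reinforce} as a special case, a useful consistency check.

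I expect the main obstacle to be bookkeeping rather than conceptual: verifying that $Z$ and the $\alpha$-dependent prefactors are finite, nonzero, and independent of $\theta$ (so that they can neither create nor destroy stationarity), and justifying the interchange of differentiation and integration. I would also record the same caveat as the earlier propositions, namely that ``fixed point'' is meant in the stationary-point sense, so the argument establishes that $\pi(\theta^\ast)$ is a critical point of the projection objective rather than necessarily its unique global minimizer over $\Pi$.
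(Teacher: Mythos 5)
Your proof is correct, but it takes a genuinely different route from the paper's. The paper invokes Minka's result that the minimizer of $D^\alpha(\mu || \cdot)$ can be computed by iterating KL projections of the form $z_{t+1} = \arg\min_z KL(\mu^\alpha z_t^{1-\alpha} || z)$, and then works backwards: it asks for which improved policy $\mu$ the identity $\mu^{\alpha}(\pi^\ast)^{1-\alpha} = R\pi^\ast$ holds, so that this iteration initialized at $z_0 = \pi^\ast$ is stationary by Proposition~\ref{proposition:fixed_point_reinforce}; solving that equation yields $\mu = R^{1/\alpha}\pi^\ast$, i.e., it \emph{derives} that $\II_\tau^{1/\alpha}$ is the improvement operator compatible with $\PP_\tau^\alpha$. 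You instead differentiate the projection objective directly: you reduce both admissible forms of $D^\alpha$ to the kernel $\int_\tau \mu^\alpha \pi_\theta^{1-\alpha}\, d\tau$, observe the exponent cancellation $\mu^\alpha \pi_{\theta^\ast}^{1-\alpha} \propto R\,\pi_{\theta^\ast}$, and conclude that the gradient at $\theta^\ast$ is proportional to $\nabla_\theta J|_{\theta^\ast} = 0$. The algebraic heart is the same identity in both cases, but yours is packaged as a first-order condition rather than as a fixed point of Minka's iteration. What each buys: your argument is more elementary and self-contained (no appeal to the iterative $\alpha$-divergence minimization machinery), it handles the Amari and R\'enyi forms uniformly, and it matches the sense of ``fixed point'' the paper actually proves in the KL case (Propositions~\ref{proposition:fixed_point_reinforce} and~\ref{proposition:fixed_point_reinforce_value} are likewise established as stationarity of the projection objective). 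The paper's route, in exchange, nominally yields the argmin form of the fixed-point property and explains where $\II_\tau^{1/\alpha}$ comes from --- though it leans on reading Proposition~\ref{proposition:fixed_point_reinforce} as an argmin statement when only stationarity was proved there, a looseness your version avoids; your closing caveat about stationary point versus global minimizer is consistent with desideratum (a) in Section 3 of the paper.
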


Proposition~\ref{proposition:fixed_point_alpha} is especially interesting in the context of imitation learning where the teacher distribution over trajectories is concentrated around few high performing trajectories. This distribution can be seen as $R^k\pi$ for an arbitrary $\pi$, say uniform, and a large value of $k$. We should then use an $\alpha$-divergence, not the KL, to recover a good policy. \citet{ke2019imitation} pointed out the usefulness of $\alpha$-divergences in this context but we are not aware of previous connections with the fixed point property. Similarly, this is reminiscent of the bounds obtained by~\citet{ross2010efficient} who show that combining the expert policy and the current policy, slowly giving more weight to the current policy, leads to tighter bounds.

We can use a similar approach for the state-action formulation, leading to the following proposition:
\begin{restatable}[]{proposition}{fpalphavalue}
\label{proposition:fixed_point_alpha_value}
Let $\alpha \in (0, 1)$. Then $\pi(\theta^\ast)$ is a fixed point of $\PP_V^\alpha \circ \II_V^{\frac{1}{\alpha}}$ with
\begin{align}
    \II_V^\alpha \pi = (Q^\pi)^\frac{1}{\alpha}\pi \quad , \quad \PP_{V, \pi}^\alpha \mu = \arg\min_{z \in \Pi} \sum_s d^\pi(s) Z^\pi_\mu(s) D^\alpha(\mu || z) \; ,
\end{align}
where $Z^\pi_\alpha(s) = \sum_a \pi(a |  s) Q^{\pi}(s, a)^\frac{1}{\alpha}$ is a normalization constant.
\end{restatable}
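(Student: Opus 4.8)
The plan is to follow the logic behind Proposition~\ref{proposition:fixed_point_reinforce_value}, of which this is the $\alpha=1$ special case: I would reduce the fixed-point claim to showing that the gradient of the projection objective, evaluated at $\pi(\theta^\ast)$, is a positive multiple of the ordinary policy gradient, which vanishes at the optimum. Concretely, writing $\pi^\ast = \pi(\theta^\ast)$, stationarity of $\theta^\ast = \argmax_\theta J(\pi_\theta)$ together with the policy gradient theorem (Eq.~\ref{eq:update_state-action}) gives $\sum_s d^{\pi^\ast}(s)\sum_a Q^{\pi^\ast}(s,a)\pi^\ast(a|s)\,\partial_\theta\log\pi_\theta(a|s)\big|_{\theta^\ast} = 0$. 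I would also assume $Q^{\pi^\ast}>0$ and that the support of $\pi_\theta$ covers that of $\pi^\ast$ for $\theta$ near $\theta^\ast$, so that every fractional power and the $\alpha$-divergence are finite and differentiable there.

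Next I would fix the first argument of the projection at $\pi^\ast$, making the target $\mu = \II_V^{1/\alpha}\pi^\ast$ with per-state conditional $\mu(a|s) = Q^{\pi^\ast}(s,a)^{1/\alpha}\pi^\ast(a|s)/Z_\alpha^{\pi^\ast}(s)$, and study $L(\theta) = \sum_s d^{\pi^\ast}(s)\,Z_\alpha^{\pi^\ast}(s)\,D^\alpha\big(\mu(\cdot|s)\,\|\,\pi_\theta(\cdot|s)\big)$. For the standard closed form of $D^\alpha$, the only $\theta$-dependent part of each per-state term is $\sum_a \mu(a|s)^\alpha \pi_\theta(a|s)^{1-\alpha}$, whose $\theta$-derivative is proportional to $\sum_a \mu(a|s)^\alpha \pi_\theta(a|s)^{1-\alpha}\,\partial_\theta\log\pi_\theta(a|s)$. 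The key step is to evaluate this at $\theta=\theta^\ast$ and substitute $\mu$: the powers recombine as $\mu(a|s)^\alpha \pi^\ast(a|s)^{1-\alpha} = Q^{\pi^\ast}(s,a)\,\pi^\ast(a|s)/Z_\alpha^{\pi^\ast}(s)^\alpha$, since $(Q^{1/\alpha})^\alpha = Q$. This restores exactly the $Q^{\pi^\ast}$-weighting of the policy gradient theorem, which is the whole reason $\II_V^{1/\alpha}$ is paired with the order-$\alpha$ divergence rather than some other power.

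Finally I would assemble the sum: the residual per-state factor $1/Z_\alpha^{\pi^\ast}(s)^\alpha$ is meant to be cancelled by the state weight of the projection (the state marginal carried by $\II_V^{1/\alpha}\pi^\ast$), leaving $\nabla_\theta L\big|_{\theta^\ast}$ proportional to $\sum_s d^{\pi^\ast}(s)\sum_a Q^{\pi^\ast}(s,a)\pi^\ast(a|s)\,\partial_\theta\log\pi_\theta(a|s)\big|_{\theta^\ast} = \nabla_\theta J\big|_{\theta^\ast} = 0$. Hence $\theta^\ast$ is a stationary point of the projection objective; using convexity of $D^\alpha$ in its second argument to upgrade this critical point to the minimizer, I obtain $\PP_V^\alpha\big(\II_V^{1/\alpha}\pi^\ast\big) = \pi^\ast$, the claimed fixed point.

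I expect the bookkeeping in this last cancellation to be the main obstacle. The state weight must carry precisely the power of $Z_\alpha^{\pi^\ast}(s)$ that offsets the $Z_\alpha^{\pi^\ast}(s)^\alpha$ thrown off by differentiating the $\alpha$-divergence; otherwise a residual state-dependent factor $Z_\alpha^{\pi^\ast}(s)^{c}$ survives and blocks the reduction to the (correctly reweighted) policy gradient. I would therefore pin down the exact convention for $D^\alpha$ before grinding through the derivative, and verify that setting $\alpha=1$ recovers Proposition~\ref{proposition:operator_state-action} and Proposition~\ref{proposition:fixed_point_reinforce_value} verbatim (so that the weight collapses to the $V^{\pi^\ast}$ weighting of the KL case); this consistency check fixes the remaining constants and the power on $Z_\alpha^{\pi^\ast}$.
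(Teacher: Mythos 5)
Your core computation is correct, but you take a genuinely different route from the paper. The shared ingredient is the power-recombination identity $\mu(a|s)^\alpha\,\pi^\ast(a|s)^{1-\alpha} = Q^{\pi^\ast}(s,a)\pi^\ast(a|s)/Z_\alpha^{\pi^\ast}(s)^\alpha$ for $\mu = \II_V^{1/\alpha}\pi^\ast$, which restores the $Q$-weighting of the policy gradient. But the paper never differentiates the $\alpha$-divergence directly: it invokes Minka's result that the $D^\alpha$-minimizer can be computed by iterated KL minimizations $z_{t+1} = \arg\min_z KL(\mu^\alpha z_t^{1-\alpha}\,||\,z)$, plugs in $z_0 = \pi^\ast$, recognizes the resulting KL target as (a rescaling of) $Q^{\pi^\ast}\pi^\ast$, and concludes from the already-proved Proposition~\ref{proposition:fixed_point_reinforce_value} that every iteration leaves $\pi^\ast$ unchanged. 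Your direct differentiation of the closed form of $D^\alpha$ buys self-containedness: no appeal to an external characterization, and no implicit assumption that fixed points of the iterative scheme coincide with minimizers of the $\alpha$-projection. The paper's reduction buys reuse of the $\alpha=1$ case and avoids committing to a particular convention for $D^\alpha$.

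The bookkeeping you flag as the main obstacle is a genuine discrepancy, not a routine check, and you should know the paper does not resolve it either. With the stated weight $d^{\pi^\ast}(s)Z_\alpha^{\pi^\ast}(s)$, neither standard convention cancels exactly: under the Amari/Minka convention the per-state gradient at $\theta^\ast$ carries $Z_\alpha^{\pi^\ast}(s)^{-\alpha}$, leaving a residual factor $Z_\alpha^{\pi^\ast}(s)^{1-\alpha}$, while under the R\'enyi convention the normalizing denominator makes the per-state gradient $\frac{1}{V^{\pi^\ast}(s)}\sum_a Q^{\pi^\ast}(s,a)\pi^\ast(a|s)\,\partial_\theta\log\pi_\theta(a|s)$, leaving a residual $Z_\alpha^{\pi^\ast}(s)/V^{\pi^\ast}(s)$. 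Exact reduction to the vanishing policy gradient requires the weight $Z_\alpha^{\pi^\ast}(s)^\alpha$ (Amari) or $V^{\pi^\ast}(s)$ (R\'enyi); the paper's own proof has the same looseness, since its displayed equality with prefactor $Z_\alpha(s)$ holds only up to this state-dependent factor (all of this is immaterial in the tabular case, where positive per-state weights do not change the minimizer). Note also that your proposed consistency check at $\alpha = 1$ cannot pin down the correct power: $Z_\alpha(s)$, $Z_\alpha(s)^\alpha$, and $V(s)$ all degenerate to $V(s)$ there, so you must carry out the general-$\alpha$ computation you outline. Finally, upgrading stationarity to minimality via convexity of $D^\alpha$ in its second argument is valid only when $\Pi$ is the full simplex; under function approximation the objective is not convex in $\theta$ --- but the paper likewise only establishes stationarity, so this is parity rather than a deficiency of your argument.
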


Note that, in the tabular case, because we are in the state-action formulation, we can ignore the projection operator and we get $\pi_t(a | s) \propto \pi_0(a | s) \left(\prod_{i=1}^{t-1} Q^{\pi_i}(s, a)\right)^\frac{1}{\alpha}$, and the policy becomes more deterministic as $\alpha$ goes to 0. In fact, at the limit $\alpha = 0$, $\II_V^\alpha\pi(\cdot | s)$ is the policy which assigns probability 1 to the action $a^\ast(s) = \arg\max_a Q^\pi(s, a)$, and $\II_V^\alpha$ becomes the greedy policy improvement operator. \textbf{The operator view can then be seen as offering us an interpolation between \RF and some value-based methods such as Q-learning}~\citep{watkins1992q}: we recover \RF with $\alpha = 1$ and the Bellman optimality operator at the limit $\alpha = 0$. From this perspective, one may say that the main difference between \RF and value-based methods is how aggressively they use value estimates to define their policy. \RF generates smooth policies that choose actions proportionally to their estimated value, value-based methods choose the action with higher value.

\paragraph{Empirical analysis}
Although using an $\alpha$-divergence is necessary to maintain $\pi(\theta^\ast)$ as stationary point, it is possible that using the KL will still lead to faster convergence early in training. We studied the effect of this family of improvement operators $\II^{\alpha}$ for different choices of $\alpha$ in the four-room domain~\citep{Sutton1999BetweenMA} (Figure \ref{fig:alpha_improvement}).The agent starts in the lower-left corner and seeks to reach the upper-right corner; episode terminates with a reward of +1 upon entering the goal state. The policy is parameterized by a softmax and all states share the same parameters, i.e. we use function approximation.

When $\II^{\alpha}_V$ is paired with a KL projection step, the return increases faster than when using \OP\textsc{Reinforce}'s improvement operator early in the process. However, such a combination converges to a suboptimal policy and incurs linear regret (Figure \ref{fig:alpha_improvement}). In contrast, combining the improvement operator with a projection that uses the corresponding $\alpha$-divergence not only speeds up learning but also converges to the optimal policy. One can use $\II^{\alpha}$ with the KL projection step heuristically, by selecting an aggressive improvement operator $\II^\alpha$ (low $\alpha$) early in optimization, and annealing $\alpha$ to $1$ to recover \OP\RF updates asymptotically. We present results of using line search to dynamically anneal the value of $\alpha$ as the policy converges  (details in Appendix \ref{appendix:line_search}).
\begin{figure}[t]
 \includegraphics[width=\linewidth]{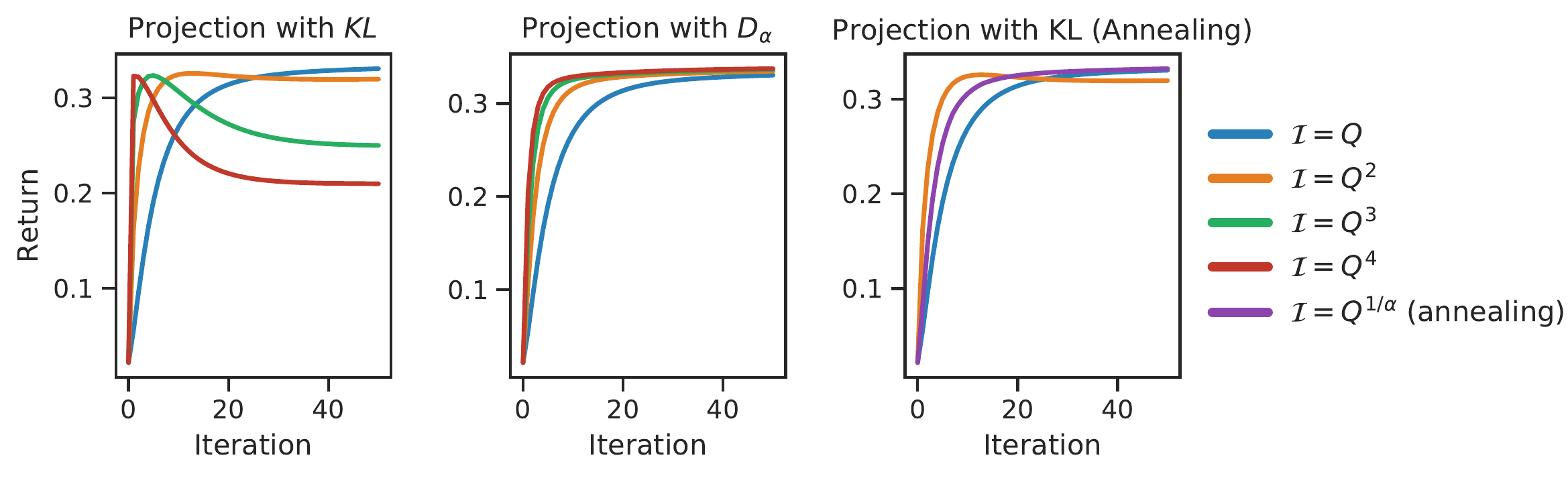} 
 \caption{Evaluation of polynomial reward improvement operators $\II_V^{1/\alpha}$ paired with different projection steps in the four-room domain. The operator $\II_V^{1/\alpha}$ generally speeds up learning, but if paired with the KL projection (left), it can converge to a sub-optimal policy. If the improvement operator is paired with an $\alpha$-divergence (middle) or the value of $\alpha$ is annealed to $1$ (right), learning is fast and converges to the optimal policy. Figure best seen in color.}
        \label{fig:alpha_improvement}
        \vspace{-0.3cm}
\end{figure}
\subsubsection{Exponential returns}
So far, all of our analysis assumed the returns were nonnegative. If the returns are lower bounded, this can be addressed by shifting them upward until they are all nonnegative. However,~\citet{leroux2016efficient} showed this is equivalent to adding a KL term that might slow down convergence. Another possibility is to transform these returns using a strictly increasing function with nonnegative outputs. The most common such transformation is the exponential function, leading to the operator $\displaystyle \II_\tau^{\exp, \beta}\pi = \exp\left(\beta R\right) \pi$, with $\beta> 0$. Although this transformation solves the non-negativity issue and makes the algorithm invariant to a shift in the rewards, similar to the original \textsc{Reinforce}, we are not aware of any result guaranteeing that there is a projection operator such that $\pi(\theta^\ast)$ remains a fixed point. However, the following proposition shows that before applying a projection, the improved policy $\II_\tau^{\exp, \beta} \pi$ achieves a higher return than $\pi$. This is true, in fact, for any transformation using a strictly increasing function with nonnegative outputs.
\begin{restatable}[]{proposition}{transformtraj}
\label{proposition:improvement_reinforce_transform}
Let $f$ be an increasing function such that $f(x) > 0$ for all $x$. Then
\begin{align}
    J\big(f(R)\pi\big) &= J(\pi) + \frac{\textrm{Cov}_\pi\big(R, f(R)\big)}{\mathbb{E}_\pi[f(R)]} \geq J(\pi).
\end{align}
\end{restatable}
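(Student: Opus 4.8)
The plan is to compute $J(f(R)\pi)$ directly from the definitions and then recognize the resulting expression as a covariance. Following the same template as $\II_\tau$, the transformed policy is
\begin{align*}
    f(R)\pi(\tau) = \frac{f(R(\tau))\pi(\tau)}{\mathbb{E}_\pi[f(R)]} \; ,
\end{align*}
where the normalizing constant $\mathbb{E}_\pi[f(R)] = \int_\tau f(R(\tau))\pi(\tau)\,d\tau$ is strictly positive precisely because $f>0$. First I would substitute this into $J(\rho) = \int_\tau R(\tau)\rho(\tau)\,d\tau$ to obtain
\begin{align*}
    J(f(R)\pi) = \frac{\int_\tau R(\tau)f(R(\tau))\pi(\tau)\,d\tau}{\mathbb{E}_\pi[f(R)]} = \frac{\mathbb{E}_\pi[R\,f(R)]}{\mathbb{E}_\pi[f(R)]} \; .
\end{align*}

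Next, I would use the identity $\textrm{Cov}_\pi(R, f(R)) = \mathbb{E}_\pi[R\,f(R)] - \mathbb{E}_\pi[R]\,\mathbb{E}_\pi[f(R)]$ to split the numerator, which immediately gives $J(f(R)\pi) = \mathbb{E}_\pi[R] + \textrm{Cov}_\pi(R,f(R))/\mathbb{E}_\pi[f(R)] = J(\pi) + \textrm{Cov}_\pi(R,f(R))/\mathbb{E}_\pi[f(R)]$, which is the claimed equality. It is worth noting that this generalizes Proposition~\ref{proposition:improvement_trajectory} exactly: taking $f = \mathrm{id}$ reduces the correction term to $\textrm{Var}_\pi(R)/\mathbb{E}_\pi[R]$.

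The remaining step, and the only one requiring anything beyond bookkeeping, is to show $\textrm{Cov}_\pi(R, f(R)) \geq 0$; since the denominator is already positive, this yields $J(f(R)\pi) \geq J(\pi)$. I would establish the covariance inequality by the standard ``two independent copies'' argument: let $\tau$ and $\tau'$ be drawn independently from $\pi$, and observe that
\begin{align*}
    2\,\textrm{Cov}_\pi(R, f(R)) = \mathbb{E}_{\tau,\tau'}\big[(R(\tau) - R(\tau'))(f(R(\tau)) - f(R(\tau')))\big] \; .
\end{align*}
Because $f$ is increasing, the integrand is nonnegative pointwise — the two factors always carry the same sign — so the expectation, and hence the covariance, is nonnegative. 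This is the only place the monotonicity of $f$ enters; positivity of $f$ is used only to guarantee that the normalizer is positive, so that $f(R)\pi$ is a well-defined policy and the division is legitimate.

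The main obstacle is essentially nonexistent: the statement is an elementary correlation inequality dressed in reinforcement-learning notation, so no clever estimate is needed. The one subtlety worth flagging is confirming that $f(R)\pi$ is genuinely a normalizable distribution over trajectories, which is exactly why the hypothesis is $f>0$ rather than merely $f \geq 0$.
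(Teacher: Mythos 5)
Your proof is correct, and the computation of the equality is essentially the one in the paper: the paper proves both Proposition~\ref{proposition:improvement_trajectory} and Proposition~\ref{proposition:improvement_reinforce_transform} with a single calculation for a generic weighting function $z$, arriving at $J(z\pi) = J(\pi)\left(1 + \frac{\textrm{Cov}_\pi(R, z)}{\mathbb{E}_\pi[R]\mathbb{E}_\pi[z]}\right)$, which matches your expression after multiplying through by $J(\pi)=\mathbb{E}_\pi[R]$. Where you genuinely go beyond the paper is the inequality: the paper only verifies nonnegativity explicitly in the special case $z=R$, where the covariance is a variance, and leaves the claim $\textrm{Cov}_\pi\big(R, f(R)\big)\geq 0$ for general increasing $f$ unproven. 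Your symmetrization argument with two independent copies,
\begin{align*}
2\,\textrm{Cov}_\pi\big(R, f(R)\big) = \mathbb{E}_{\tau,\tau'}\Big[\big(R(\tau) - R(\tau')\big)\big(f(R(\tau)) - f(R(\tau'))\big)\Big] \geq 0 \; ,
\end{align*}
is exactly the missing step (Chebyshev's correlation inequality), and it correctly isolates where monotonicity of $f$ is used versus where positivity of $f$ is used (only to make the normalizer positive). So your write-up is not just correct but strictly more complete than the paper's own proof of this proposition.
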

While the improvement operator increases the return, we emphasize that any increase in performance may be annihilated by the consequent projection operator.

\subsection{An operator view of PPO}

PPO~\citep{schulman2017proximal} is one of the most widely used policy gradient methods. At each iteration it maximizes a surrogate objective that depends on the current distribution over states and Q-function, which is clipped to avoid excessively large policy updates. The surrogate objective is to maximize $\sum_a \pi(a|s) Q^\mu(s, a)$, where states are sampled from the state distribution of $\mu$. PPO is often formulated with an entropy bonus and an entropy penalty coefficient $\beta > 0$ (letting $\beta \to \infty$ removes entropy regularization). The operators that allow us to recover PPO are presented below.

\begin{align}
    \II_V\pi(s,a) &= d^\pi(s) \frac{\exp\left(\beta Q^\pi(s, a)\right)}{\sum_{a'}\exp\left(\beta Q^\pi(s, a')\right)}\\
    \PP_{V}\mu &= \arg\min_{z \in \Pi} \sum_s \mu(s) KL\big(\textrm{clip}(z(\cdot | s)) || \mu(\cdot | s)\big)\; ,\\
    \textrm{ leading to }\qquad \pi_{t+1} &= \arg\min_z \sum_{s} d^{\pi_t}(s)\left( \sum_a z(a | s)Q^{\pi_t}(s, a) - \frac{1}{\beta}\sum_a z(a | s) \log z(a | s) \right) \; ,
\end{align}
where we omitted the clipping on the last line for readability.
There are three main differences between the operators that recover PPO and the operators of \OP\RF (Eq.~\ref{eq:i_op_sv} and~\ref{eq:p_op_sv}): (1)~The policy improvement operator does not increase the probability of good states because, different from \OP\textsc{Reinforce}, $V(s)$ is not part of $\II_V$; (2)~the policy improvement operator only uses the $Q$-values in its distribution over actions given states,  instead of also using $\pi$; (3)~the KL in the projection operator is reversed.
The last point is particularly important as this reversed KL is \emph{mode seeking}, so the resulting distribution $\pi_{t+1}$ will focus its mass on the mode of $\II_V(\pi_t)$, which is the action with the largest $Q$-value. This can quickly lead to deterministic policies, especially when entropy regularization is not used, justifying the necessity of the clipping in the KL. By comparison, the projection operator of Eq.~\ref{eq:p_op_sv} uses a KL that is \emph{covering}, naturally preventing $\pi_{t+1}$ from becoming too deterministic. While our analysis fits current PG methods into the operator view, they can also be framed in the language of optimization, for example as performing approximate mirror descent in an MDP \citep{neuJG17}.

\subsection{An operator view of MPO}

The operator view can also be used to provide insight on  MPO~\citep{abdolmaleki2018maximum}, a state-of-the-art algorithm derived through the control-as-inference framework \citep{deisenroth2013survey, levine2018reinforcement}. The policy improvement operator that recovers MPO is:
\begin{align}
    \II_V \pi(s,a) &= d^\pi(s) \frac{\pi(a, s)\exp\left(\beta Q^\pi(s, a)\right)}{\sum_{a'}\pi(a', s)\exp\left(\beta Q^\pi(s, a')\right)} \; ,
\end{align}
with the projection operator being the same as \OP\textsc{Reinforce}'s. Note that the improvement operator interpolates between those of \OP\RF and PPO: it does not upweight good states and it uses an exponential transformation of the rewards, like PPO, but it still uses the policy $\pi(a|s)$ and not just the rewards. In this case, clipping is not necessary because the KL is in the ``covering'' direction.

\section{Related Work}

Traditional analyses of PG methods cast the method as gradient ascent on the expected return objective. Using these tools, convergence can be shown to a stationary point \citep{williams1992simple,schulman2015trust}, and under sufficient assumptions about the function class, strengthened to convergence to the optimal policy~\citep{agarwal2019optimality}. When entropy regularization is also added, the optimal solutions of PG methods have been shown to coincide with those of soft Q-learning \citep{nachum17bridging,schulman17equivalence}. This connection is generally limited to the optimal policy, and does not explain how the training dynamics of PG methods relate to value-based ones. In contrast, using operators enables a new perspective of the learning dynamics of PG methods, for example providing an interpolation between PG and value-based methods (Section \ref{sec:accumulating_improvement_steps}).

Instead of following the \RF update, many modern PG methods optimize a surrogate approximation, the most common being the linear surrogate expansion of the expected return by \citet{kakade2002approximately}, which can be generalized to a higher-order Taylor expansion \citep{Tang2020TaylorEP}. While the lower bound surrogate objective derived with the operator view (Proposition \ref{proposition:lower_bound_state-action}) agrees with these approximations to first-order terms, they have different behavior globally. Most prior surrogate objectives penalize deviation from the original data-collection policy, which causes PG methods to take excessively conservative steps, whereas the operator lower bound penalizes deviation from the \textit{improved} policy, potentially avoiding this issue.

The operator view is closely related to control-as-inference, a line of work that casts RL as performing inference in a graphical model \citep{deisenroth2013survey, levine2018reinforcement}, where the return of a trajectory corresponds to an unnormalized $\log$ probability. EM algorithms in this graphical model are analogous to \OP\textsc{Reinforce}, where the expectation step corresponds to policy improvement, and the maximization step corresponds to projection. Similarly, incomplete projection steps by partially maximizing the lower bound in Proposition \ref{proposition:lower_bound_state-action} is equivalent to doing incremental EM \citep{neal1998view}. While the operator view has a clear semantic interpretation, the semantics of the graphical model in control-as-inference are controversial in many contexts, since rewards in most MDPs cannot easily be interpreted as probabilities \citep{levine2018reinforcement}. The operator view is also more general, since it allows us to incorporate RL algorithms like PPO that do not fit cleanly into the control-as-inference formulation.

\section{Conclusion}

We cast PG methods as the repeated application of two operators: a policy improvement operator and a projection operator. Starting with a modification of \textsc{Reinforce}, we introduced the operators that recover well-known algorithms such as PPO and MPO. This operator perspective also allowed us to further bridge the gap between policy-based and value-based methods, showing how \RF and the Bellman optimality operator can be seen as the same method with only one parameter changing.

Importantly, this perspective helps us improve our understanding behind decisions often made in the field. We showed how entropy regularization helps by increasing the variance of the returns, guaranteeing larger improvements for policy methods; we showed how even single gradient steps towards the full projection operator are guaranteed to lead to an improvement; and how practices such as exponentiating rewards to make them non-negative, as done by MPO, still lead to a meaningful policy improvement operator. Finally, by introducing new operators based on the $\alpha$-divergence we were able to show that there are other operators that can still lead to faster learning, shedding some light into how to better use, for example, expert trajectories in reinforcement learning algorithms, as often done in high-profile success stories~\citep{silver2016mastering,vinyals2019grandmaster}.

Finally, we hope the results we presented in this paper will empower researchers to design new policy gradient methods, either through the introduction of new operators, or by leveraging the intuitions we presented here. This operator perspective opens up a new avenue of research in analyzing policy gradient methods and it can also provide a different perspective on traditional problems in the field, such as how to choose appropriate basis functions to better represent policies, and how to do better exploration by design sampling policies different than the agent's current policy.

\section*{Acknowledgements}
We thank Doina Precup, Dale Schuurmans, Philip Thomas, Marc G. Bellemare, Kavosh Asadi, Danny Tarlow, and members of the Brain Montreal team for enlightening discussions and feedback on earlier drafts of the paper. NLR is supported by a Canada CIFAR AI Chair.

\section*{Broader Impact}
As this work has a theoretical focus, it is unlikely to have a direct impact on society at large although it may guide future research with such an impact.

\bibliography{full}
\bibliographystyle{plainnat}

\newpage

\appendix
\renewcommand\thefigure{\thesection.\arabic{figure}}    

\textbf{In the entirety of the Appendix, we shall use $\pi^\ast$ instead of $\pi(\theta^*)$ for increased readability.}

\section{Definition of the operators}
\optraj*
\begin{proof}
Denoting $\mu$ the distribution over trajectories such that $\mu(\tau) \propto R(\tau) \pi(\tau)$, we have
\begin{align}
    KL(\mu || \pi)  &= \int_\tau \mu(\tau) \log \frac{\mu(\tau)}{\pi(\tau)} \; d\tau\\
    \frac{\partial KL(\mu || \pi)}{\partial \theta} &= -\int_\tau \mu(\tau) \nabla_\theta \log \pi(\tau) \; d\tau\\
    &\propto -\int_\tau R(\tau)\pi(\tau) \nabla_\theta \log \pi(\tau) \; d\tau
\end{align}
by definition of $\mu(\tau)$.
\end{proof}

\opvalue*
\begin{proof}
\begin{align}
  \sum_s d^{\pi_t}(s) V^\pi(s) \frac{\partial KL(Q^\pi\pi_t || \pi)}{\partial \theta}  &= -\sum_s d^{\pi_t}(s) V^\pi(s) \sum_a Q^\pi\pi_t(a | s) \nabla_\theta \log \pi(a|s)\nonumber\\
  &= -\sum_s d^{\pi_t}(s) \sum_a Q^\pi(s, a)\pi_t(a | s) \nabla_\theta \log \pi(a|s) \; ,
\end{align}
and we recover the update of Eq.~\ref{eq:update_state-action}.
\end{proof}

\section{\texorpdfstring{$\pi^\ast$}{pi star} is a stationary point when using the KL}

\fptraj*
\begin{proof}
We have
\begin{align}
    \nabla_\theta KL(R\pi^\ast || \pi)\bigg|_{\pi = \pi^\ast}   &= \int_\tau R(\tau)\pi^\ast(\tau) \nabla_\theta \log \pi^\ast(\tau) \; d\tau\\
                                                                &= 0 \textrm{ by definition of } \pi^\ast \; .
\end{align}
\end{proof}

\fpvalue*
\begin{proof}
We have
\begin{align}
    \nabla_\theta \sum_s d^{\pi^\ast}(s) V^{\pi^\ast}(s) KL(Q^{\pi^\ast}\pi^\ast || \pi)\bigg|_{\pi = \pi^\ast}   &= \sum_s d^{\pi^\ast}(s)  \sum_ a \pi^\ast(a | s) Q^{\pi^\ast}(s, a) \frac{\partial \log \pi_{\theta}(a | s)}{\partial \theta}\bigg|_{\theta = \theta^\ast}\\
    &= 0 \textrm{ by definition of } \pi^\ast \; .
\end{align}
\end{proof}

\section{Expected return of the improved policy}
We use the same proof for the following two propositions:
\improvtraj*
\transformtraj*
\begin{proof}
We now show the expected return of the policy $z\pi$, defined as
\begin{align}
    z\pi(\tau)  &= \frac{1}{\int_{\tau'} z(\tau')\pi(\tau')\; d\tau'} z(\tau)\pi(\tau) \; ,
\end{align}
for any function $z$ over trajectories. In particular, we show that choosing $z=R$ leads to an improvement in the expected return.

\begin{align}
    J(z\pi) &= \int_\tau R(\tau) (z\pi)(\tau) \; d\tau\\
    &= \int_\tau \frac{R(\tau)z(\tau)\pi(\tau)}{\int_{\tau'}z(\tau')\pi(\tau')\; d\tau'}\; d\tau\\
    &= \left(\int_{\tau'}R(\tau')\pi(\tau')\; d\tau'\right)\frac{\int_\tau R(\tau)z(\tau)\pi(\tau)\; d\tau}{\int_{\tau'}z(\tau')\pi(\tau')\; d\tau'\int_{\tau'}R(\tau')\pi(\tau')\; d\tau'}\\
    &= J(\pi) \frac{\mathbb{E}_\pi[Rz]}{\mathbb{E}_\pi[R]\mathbb{E}_\pi[z]}\\
    &= J(\pi) \left(1 + \frac{\textrm{Cov}_\pi(R, z)}{\mathbb{E}_\pi[R]\mathbb{E}_\pi[z]}\right) \; ,
\end{align}
where $\textrm{Cov}_\pi(R, z) = \mathbb{E}_\pi[Rz] - \mathbb{E}_\pi[R]\mathbb{E}_\pi[z]$.

When $z=R$, the expected return becomes
\begin{align}
    J(R\pi) &= J(\pi) \left(1 + \frac{\textrm{Var}_\pi(R)}{(\mathbb{E}_\pi[R])^2}\right)\\
            &\geq J(\pi) \; .
\end{align}
\end{proof}

\section{\texorpdfstring{$\pi^\ast$}{pi star} is a stationary point when using \texorpdfstring{$\alpha$}{alpha}-divergence}

\fpalphatraj*
\begin{proof}
We now show that $\pi^\ast$ is the fixed point of $\PP_\tau^\alpha \circ \II_\tau^\alpha$. The minimizer of $d^\alpha$ with respect to its second argument can be computed through iterative minimization of $D_{\alpha'}$ for any other nonzero $\alpha'$~\citep{minka2005divergence}:
\begin{align}
  z_{t+1} &= \arg\min_z D_{\alpha'}\left(\pi^{\alpha / \alpha'}z_t^{1 - \alpha / \alpha'} \Big|\Big| z\right).
\end{align}
In the remainder of this proof, we shall use $\alpha'=1$, leading to
\begin{align}
  \label{eq:iterative_alpha}
  z_{t+1} &= \arg\min_z KL(\pi^{\alpha}z_t^{1 - \alpha} || z).
\end{align}

We know that $\pi^\ast$ is a stationary point of $\PP_\tau^1 \circ \II_\tau^1$, i.e.
\begin{align}
  \pi^\ast &= \arg\min_z KL(R\pi^\ast || z).
\end{align}

Hence, we see that, if $\pi^{\alpha}(\pi^\ast)^{1 - \alpha} = R\pi^\ast$, the iterative process described in Eq.~\ref{eq:iterative_alpha} initialized with $z_0 = \pi^\ast$ will be stationary with $z_i = \pi^\ast$ for all $i$. This gives us the form we need for $\pi = \II_\tau^\alpha \pi^\ast$. Indeed, we must have
\begin{align}
\pi^{\alpha}(\pi^\ast)^{1 - \alpha} &= R\pi^\ast\\
(\II^\alpha \pi^\ast)^{\alpha}(\pi^\ast)^{1 - \alpha} &= R\pi^\ast\\
\II^\alpha \pi^\ast  &= [R (\pi^\ast)^\alpha]^{1 / \alpha}\\
                            &= R^{1/\alpha} \pi^\ast\\
\II^\alpha          &= (\pi \longrightarrow R^{1/\alpha} \pi).
\end{align}
\end{proof}

\fpalphavalue*
\begin{proof}
The proof is very similar to that of Proposition~\ref{proposition:fixed_point_alpha}. We know that $\pi^\ast$ is a stationary point of $\PP_V^1 \circ \II_V^1$, i.e.
\begin{align}
  0 &= \sum_s d^{\pi^\ast}(s) \sum_ a \pi^\ast(a | s) Q^{\pi^\ast}(s, a) \frac{\partial \log \pi_{\theta}(a | s)}{\partial \theta}\bigg|_{\theta = \theta^\ast}\\
    &= \sum_s d^{\pi^\ast}(s) \sum_a  \pi^\ast(a | s)^{1 - \alpha}\left(\pi^\ast(a | s) Q^{\pi^\ast}(s, a)^\frac{1}{\alpha}\right)^\alpha  \frac{\partial \log \pi_{\theta}(a | s)}{\partial \theta}\bigg|_{\theta = \theta^\ast}\\
    &= \sum_s d^{\pi^\ast}(s) Z_\alpha(s) \nabla_\theta KL\left(\left(\pi^\ast(\cdot |  s) Q^{\pi^\ast}(s, \cdot)^\frac{1}{\alpha}\right)^\alpha \pi^\ast(a | s)^{1 - \alpha} \Big|\Big| \pi\right)\bigg|_{\pi = \pi^\ast} \; ,
\end{align}
where $Z_\alpha(s) = \sum_a \pi^\ast(\cdot |  s) Q^{\pi^\ast}(s, \cdot)^\frac{1}{\alpha}$ is the normalization constant. Hence, each iteration of Eq.~\ref{eq:iterative_alpha} will leave $\pi^\ast$ unchanged.

Hence, we see that, if $\pi^{\alpha}(\pi^\ast)^{1 - \alpha} = R\pi^\ast$, the iterative process described in Eq.~\ref{eq:iterative_alpha} initialized with $z_0 = \pi^\ast$ will be stationary with $z_i = \pi^\ast$ for all $i$. This gives us the form we need for $\pi = \II^\alpha \pi^\ast$. Indeed, we must have
\begin{align}
\pi^{\alpha}(\pi^\ast)^{1 - \alpha} &= Q\pi^\ast\\
(\II^\alpha \pi^\ast)^{\alpha}(\pi^\ast)^{1 - \alpha} &= Q\pi^\ast\\
\II^\alpha \pi^\ast  &= [Q (\pi^\ast)^\alpha]^{1 / \alpha}\\
                            &= Q^{1/\alpha} \pi^\ast\\
\II^\alpha          &= (\pi \longrightarrow Q^{1/\alpha} \pi).
\end{align}
\end{proof}

\section{Lower bounds}

\subsection{Trajectory formulation}
We state here the proposition for the trajectory formulation.
\begin{proposition}[Trajectory formulation]
\label{proposition:lower_bound_reinforce}
For any two distributions $\pi$ and $\mu$, we have
\begin{align}
    J(\pi)  &\geq J(\mu) \bigg(1 - KL(\II_\tau\mu || \pi) + KL(\II_\tau\mu || \mu)\bigg) \; .
\end{align}
Hence, any policy $\pi$ such that $KL(\II_\tau\pi_t || \pi) < KL(\II_\tau\pi_t || \pi_t)$ implies $J(\pi) > J(\pi_t)$.
\end{proposition}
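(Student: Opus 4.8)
The plan is to start from an elementary rewriting of $J(\pi)$ in terms of the improved policy $\II_\tau\mu$ and then apply a single tangent-line inequality for the logarithm. Recall that $\II_\tau\mu(\tau) = \frac{1}{J(\mu)}R(\tau)\mu(\tau)$, so that $R(\tau) = J(\mu)\,\II_\tau\mu(\tau)/\mu(\tau)$ wherever $\mu(\tau) > 0$. Substituting this into $J(\pi) = \int_\tau R(\tau)\pi(\tau)\,d\tau$ gives $J(\pi) = J(\mu)\,\mathbb{E}_{\II_\tau\mu}[\pi/\mu]$; that is, the return of $\pi$ equals the return of $\mu$ times the average, under the improved policy, of the likelihood ratio $\pi/\mu$.

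Next I would lower-bound the ratio inside the expectation using $\log x \leq x - 1$, equivalently $x \geq 1 + \log x$ for $x > 0$, applied pointwise to $x = \pi(\tau)/\mu(\tau)$. Taking expectations under $\II_\tau\mu$ yields $\mathbb{E}_{\II_\tau\mu}[\pi/\mu] \geq 1 + \mathbb{E}_{\II_\tau\mu}[\log(\pi/\mu)]$. The final algebraic step is to identify the log-ratio expectation with a difference of KL terms: since $\mathbb{E}_{\II_\tau\mu}[\log(\pi/\mu)] = \mathbb{E}_{\II_\tau\mu}[\log(\II_\tau\mu/\mu)] - \mathbb{E}_{\II_\tau\mu}[\log(\II_\tau\mu/\pi)] = KL(\II_\tau\mu||\mu) - KL(\II_\tau\mu||\pi)$, we obtain $\mathbb{E}_{\II_\tau\mu}[\pi/\mu] \geq 1 - KL(\II_\tau\mu||\pi) + KL(\II_\tau\mu||\mu)$. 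Multiplying through by $J(\mu)$, which is strictly positive because all returns are assumed positive, preserves the inequality and gives exactly the claimed bound.

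For the second assertion I would specialize to $\mu = \pi_t$: the bound reads $J(\pi) \geq J(\pi_t)\big(1 - KL(\II_\tau\pi_t||\pi) + KL(\II_\tau\pi_t||\pi_t)\big)$, so whenever $KL(\II_\tau\pi_t||\pi) < KL(\II_\tau\pi_t||\pi_t)$ the parenthesized factor exceeds $1$ and, using $J(\pi_t) > 0$, we conclude $J(\pi) > J(\pi_t)$.

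The computation is short; the only genuine subtlety is the direction of the tangent-line inequality combined with the positivity of $J(\mu)$, which together ensure the inequality still points the right way after multiplying by $J(\mu)$. Degenerate cases where $\pi$ vanishes on the support of $\II_\tau\mu$ are automatically handled, since $KL(\II_\tau\mu||\pi)$ is then $+\infty$ and the bound holds trivially. I expect the main obstacle, if any, to be purely presentational: making sure the likelihood ratios and expectations are evaluated only on the support of $\mu$ (equivalently of $\II_\tau\mu$), so that every quantity is well defined.
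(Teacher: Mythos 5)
Your proof is correct and follows essentially the same route as the paper's: both rest on the tangent-line inequality $x \geq 1 + \log x$ applied to the likelihood ratio $\pi(\tau)/\mu(\tau)$, weighted by the (nonnegative) return-reweighted measure, and then identify the resulting log-ratio expectation as $KL(\II_\tau\mu\,||\,\mu) - KL(\II_\tau\mu\,||\,\pi)$. Your repackaging of the weighting as an expectation under $\II_\tau\mu$, together with the explicit remarks on positivity of $J(\mu)$ and on support/infinite-KL degeneracies, is just a cleaner presentation of the identical argument.
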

\begin{proof}
Let $\pi$ and $\mu$ be two arbitrary distributions over trajectories such that the support of $\pi$ is included in that of $\mu$. Then
\begin{align}
  J(\pi)  &= \int_\tau R(\tau)\pi(\tau)\; d\tau\\
          &= \int_\tau R(\tau)\frac{\pi(\tau)}{\mu(\tau)}\mu(\tau)\; d\tau\\
          &\geq \int_\tau R(\tau)\left(1 + \log\frac{\pi(\tau)}{\mu(\tau)}\right)\mu(\tau)\; d\tau\\
          &= \int_\tau R(\tau) \mu(\tau)\; d\tau + \int_\tau R(\tau) \mu(\tau)\log \pi(\tau)\; d\tau - \int_\tau R(\tau) \mu(\tau)\log \mu(\tau)\; d\tau\\
          &= J(\mu) - J(\mu)KL(R\mu || \pi) + J(\mu)KL(R\mu || \mu)\\
  J(\pi)  &\geq J(\mu) \bigg(1 - KL(R\mu || \pi) + KL(R\mu || \mu)\bigg) \; .
\end{align}
\end{proof}

\subsection{State-action formulation}

To prove that minimizing Eq.~\ref{eq:equivalent_step_state-action} is equivalent to maximizing a lower bound on the expected return $J$, we shall show that this function has the same gradient as a lower bound $J_\mu$ on $J$ and thus only differs by a constant.

\begin{proposition}
\label{proposition:lb_j_mu}
Let us define $J_\mu$ as
\begin{align}
    J_\mu(\pi)  &= \sum_{h=0}^H \gamma^h \int_{\tau} r(s_h, a_h)\left(1 + \log\frac{\pi_h(\tau_h)}{\mu_h(\tau_h)}\right)\mu(\tau) \; d\tau \; ,
\end{align}
where $H$ is the horizon (which can be infinite), $\tau_h$ is the trajectory of length $h$ that is a prefix of the full trajectory $\tau$, and $\pi_h(\tau_h)$ (resp. $\mu_h(\tau_h)$) is the total probability mass of trajectories with prefix $\tau_h$ under policy $\pi$ (resp. $\mu$). 

Then we have $J_\mu(\pi) \leq J(\pi)$ for any $\mu$ and any $\pi$ such that the support of $\mu$ covers that of $\pi$.
\end{proposition}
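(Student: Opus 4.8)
The plan is to reduce the inequality to the elementary bound $1 + \log x \le x$ applied one prefix at a time, exploiting crucially that the rewards are nonnegative. Working term-by-term in $h$ avoids any need to justify interchanging the sum over horizon with the trajectory integral. First I would write the true return as
\begin{align*}
J(\pi) = \sum_{h=0}^H \gamma^h \int_\tau r(s_h, a_h)\,\pi(\tau)\, d\tau ,
\end{align*}
and observe that for each fixed $h$ the integrand $r(s_h,a_h)$ depends on $\tau$ only through its length-$h$ prefix $\tau_h$, so integrating out the continuation collapses the term to the prefix distribution:
\begin{align*}
\int_\tau r(s_h,a_h)\,\pi(\tau)\, d\tau = \int_{\tau_h} r(s_h,a_h)\,\pi_h(\tau_h)\, d\tau_h .
\end{align*}
The same reduction applies to each term of $J_\mu(\pi)$, because both $r(s_h,a_h)$ and the ratio $\pi_h(\tau_h)/\mu_h(\tau_h)$ depend only on $\tau_h$, so integrating $\mu(\tau)$ over the continuation produces $\mu_h(\tau_h)$.

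Next I would apply the pointwise inequality $1 + \log x \le x$ with $x = \pi_h(\tau_h)/\mu_h(\tau_h)$ and multiply through by the nonnegative factors $\gamma^h$, $r(s_h,a_h) \in [0, R_{\max}]$, and $\mu_h(\tau_h)$. Here the nonnegativity of the reward is \emph{essential}: it is what preserves the direction of the inequality. This yields, for every $h$ and every prefix $\tau_h$,
\begin{align*}
r(s_h,a_h)\left(1 + \log\frac{\pi_h(\tau_h)}{\mu_h(\tau_h)}\right)\mu_h(\tau_h) \;\le\; r(s_h,a_h)\,\pi_h(\tau_h) .
\end{align*}
Integrating over prefixes and summing over $h$, the left side is exactly $J_\mu(\pi)$ after the reduction above, while the right side reassembles precisely into $J(\pi)$, giving $J_\mu(\pi) \le J(\pi)$.

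The genuinely delicate point — and the one I expect to be the only real obstacle — is the behaviour of the ratio $\pi_h/\mu_h$ and its logarithm where $\mu_h(\tau_h) = 0$. This is where the covering-support hypothesis is used: wherever $\mu_h$ vanishes, the assumption that the support of $\mu$ covers that of $\pi$ forces $\pi_h$ to vanish as well, so such prefixes carry zero mass under both policies and can simply be excluded from the integrals without affecting either $J(\pi)$ or $J_\mu(\pi)$. On the remaining region $\mu_h(\tau_h) > 0$, the argument of the logarithm is strictly positive and everything above is well-defined, completing the bound.
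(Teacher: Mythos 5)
Your proposal is correct and follows essentially the same route as the paper's proof: decompose $J(\pi)$ over horizon steps, collapse each term to the prefix distribution $\pi_h(\tau_h)$, apply $1+\log x \le x$ with $x = \pi_h/\mu_h$ (using nonnegativity of rewards to preserve the inequality), and reassemble into $J_\mu(\pi)$. Your explicit handling of the zero-mass prefixes via the covering-support hypothesis is a detail the paper leaves implicit, but it does not change the argument.
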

\begin{proof}
We can rewrite
\begin{align}
    J(\pi)  &= \int_\tau R(\tau) \pi(\tau) \; d\tau\\
            &= \int_\tau \left(\sum_{h=0}^H \gamma^h r(s_h, a_h)\right)\pi(\tau) \; d\tau\\
            &= \sum_{h=0}^H \gamma^h \int_\tau r(s_h, a_h)\pi(\tau) \; d\tau\\
            &= \sum_{h=0}^H \gamma^h \int_\tau r(s_h, a_h)\pi_h(\tau_h) \; d\tau.
\end{align}

Then, using the same technique as for the trajectory formulation, we have
\begin{align}
    J(\pi)  &= \sum_{h=0}^H \gamma^h \int_\tau r(s_h, a_h)\frac{\pi_h(\tau_h)}{\mu_h(\tau_h)}\mu_h(\tau_h) \; d\tau\\
            &\geq \sum_{h=0}^H \gamma^h \int_\tau r(s_h, a_h)\left(1 + \log\frac{\pi_h(\tau_h)}{\mu_h(\tau_h)}\right)\mu_h(\tau_h) \; d\tau\\
            &= \sum_{h=0}^H \gamma^h \int_\tau r(s_h, a_h)\left(1 + \log\frac{\pi_h(\tau_h)}{\mu_h(\tau_h)}\right)\mu(\tau) \; d\tau\\
            &= J_\mu(\pi).
\end{align}
\end{proof}

Then we can prove the following proposition:
\lbvalue*
\begin{proof}
Since $J_\mu$ is a lower bound on $J$, by Proposition~\ref{proposition:lb_j_mu}, we prove that its gradient is the same as that of 

\begin{align}
\nabla_\theta J_\mu(\pi)    &= \nabla_\theta \left(\sum_{h=0}^H \gamma^h \int_\tau r(s_h, a_h)\left(1 + \log\frac{\pi_h(\tau_h)}{\mu_h(\tau_h)}\right)\mu(\tau) \; d\tau\right)\\
&= \sum_{h=0}^H \gamma^h \int_{\tau} r(s_h, a_h)\nabla_\theta\log\pi_h(\tau_h)\mu(\tau) \; d\tau\\
&= \sum_{h=0}^H \gamma^h \int_{\tau} r(s_h, a_h)\left(\sum_{h'=0}^h \nabla_\theta\log\pi(a_{h'} | s_{h'})\right)\mu(\tau) \; d\tau\\
&= \sum_{h'=0}^H \int_{\tau} \nabla_\theta\log\pi(a_{h'} | s_{h'}) \left(\sum_{h=h'}^H \gamma^h r(s_h, a_h)\right)\mu(\tau) \; d\tau\\
&= \sum_{h'=0}^H \sum_s \sum_a \nabla_\theta\log\pi(a | s) d^{h'}_\mu(s) \mu(a | s)\gamma^{h'}Q^\mu(s, a)\\
&= \sum_{h'=0}^H \gamma^{h'} \sum_s d^{h'}_\mu(s) \sum_a \nabla_\theta\log\pi(a | s)  \mu(a | s)\gamma^{h'}Q^\mu(s, a)
\end{align}
\begin{align}
&= \sum_s d^\mu(s)\sum_a Q^\mu(s, a)\mu(a | s) \nabla_\theta\log\pi(a | s)\\
&= \nabla_\theta \left(-\sum_s d^{\mu}(s) V^{\mu}(s) KL(Q^{\mu}\mu || \pi)\right).
\end{align}

Hence these two functions only differ by a constant. Using $J_\pi(\pi) = \pi$, we identify the constant as being $J(\mu) + \mathbb{E}_\mu[V^\mu(s)]D_{\II_\mu}(\mu)$.
\end{proof}

\section{Experimental Details}
\label{appendix:exp_details}
We reiterate the details of our didactic empirical study in the four-room domain \citep{Sutton1999BetweenMA}. An agent starts in the lower-left corner and seeks to reach the upper-right corner; upon entering the goal state, the agent receives a reward of +1 and terminates the episode. The policy is parameterized by softmax probabilities, $\pi_{\theta}(a|s) = \frac{\exp(\theta_a)}{\sum_{a \in \mathcal{A}} \exp(\theta_a)}$ for $\theta \in \mathbb{R}^{|\mathcal{A}|}$, where all states share the same parameters. As with our analysis, these experiments compute gradients and operators exactly; in practice, stochasticity from sampling and approximate value estimation can affect the resultant performance. In Figure \ref{fig:lower_bounds}, we plot policies in the sub-segment $\{[0.1, 0.8t, 0.8(1-t), 0.1]: t \in [0, 1]\}$, denoting the probability of taking the down, left, up, and right actions respectively.

The linear approximation of conservative policy iteration (CPI)~\citep{kakade2002approximately} presented in Fig.~\ref{fig:lower_bounds} does not include the quadratic term in $\alpha$ necessary to maintain the lower bound property (see Theorem 1 of~\citep{schulman2015trust}) as this term can be made arbitrarily large by setting $\gamma$ arbitrarily close to 1. This makes algorithms like TRPO very conservative when optimizing the lower bound, leading them to optimize relaxations instead.

\begin{figure}[h]
    \centering
    \includegraphics[width=\linewidth]{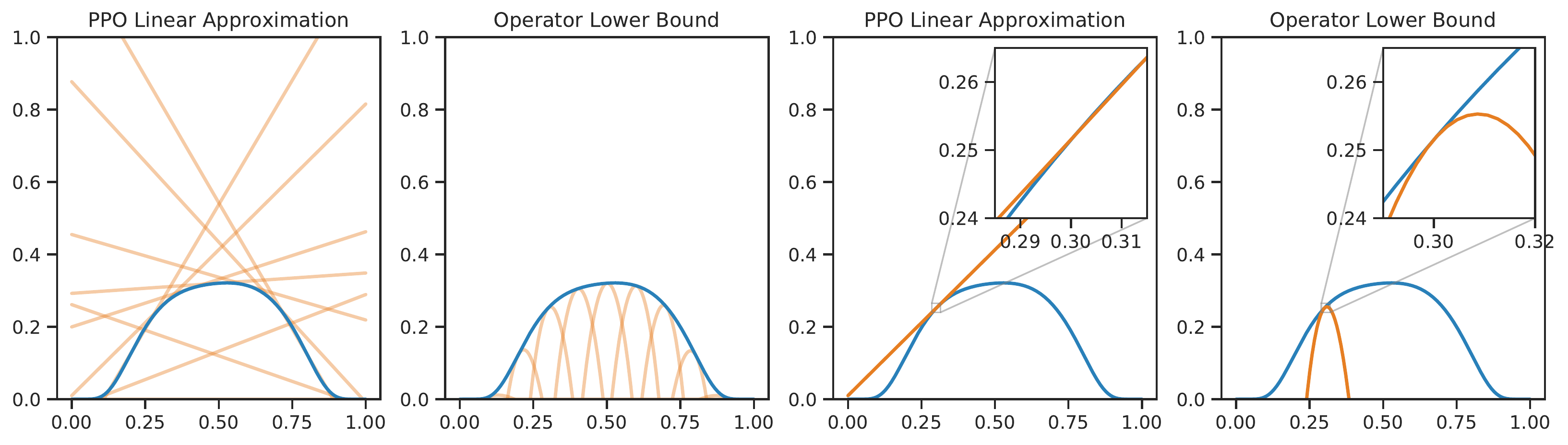}
    \caption{We visualize the true objective $J(\pi)$, the operator lower bound (Proposition \ref{proposition:lower_bound_state-action}), and the linear approximation optimized by TRPO and PPO on a 1d subspace of the policy space for the four-room domain (details in Appendix \ref{appendix:exp_details}). On the left, we plot the auxiliary objectives corresponding to different choices of sampling $\pi_t$. On the right, the objectives are plotted locally around $\pi_t$.}
    \label{fig:lower_bounds}
\end{figure}

\subsection{Multi-step operators with line search}
\label{appendix:line_search}
Proposition \ref{proposition:lower_bound_state-action} implies that the single-step improvement operator converges to a desired solution by demonstrating that it fully minimizes a lower bound on the expected return. As partial minimization of this lower-bound also implies convergence, we propose a line-search approach that chooses the minimum $\alpha$ under which the lower-bound is optimized. Specifically, letting $L_\mu(\pi)$ be the lower bound in Proposition \ref{proposition:lower_bound_state-action}, we choose the lowest alpha such that 
\begin{equation}
    L_\mu(\mu) - L_\mu(\PP \II^{\alpha} \mu) \geq \frac{1}{2}\big(L_\mu(\mu) - L_\mu(\PP \II^1 \mu)\big)
\end{equation}

\end{document}